\newtheorem{lemma}{Lemma}
\newtheorem{proof}{Proof}
\begin{document}
\title{Statistical Modeling of Soft Error Influence on Neural Networks}

\author{Haitong~Huang,
        Xinghua~Xue,
        Cheng~Liu,
        Ying~Wang,
        Tao~Luo,
        Long~Cheng, \\
        Huawei~Li,~\IEEEmembership{Senior Member,~IEEE,} 
        and
        Xiaowei~Li,~\IEEEmembership{Senior Member,~IEEE} 
        
\thanks{The corresponding author is Cheng Liu.}
\thanks{Haitong Huang, Xinghua Xue, Cheng Liu, Ying Wang, and Xiaowei Li are with both State Kep Lab of Processors (SKLP), Institute of Computing Technology (ICT), Chinese Academy of Sciences (CAS), Beijing 100190, China and Department of Computer Science, University of Chinese Academy of Sciences, Beijing 100190.(e-mail:liucheng@ict.ac.cn)}
\thanks{Huawei Li is with both State Kep Lab of Processors(SKLP), Institute of Computing Technology (ICT), Chinese Academy of Sciences (CAS), Beijing 100190, China and Peng Cheng Laboratory, Shenzhen, 518055, China.}
\thanks{Tao Luo is with Institute of High Performance Computing, A*STAR, 138632, Singapore.}
\thanks{Long Cheng is with North China Electric Power University, Beijing 102206, China.}
}

\markboth{IEEE Transactions on Computer-aided Design of Integrated Circuits and Systems, ~Vol.~xx, No.~xx, xxx~2022}%
{Haitong Huang \MakeLowercase{\textit{et al.}}: Bare Demo of IEEEtran.cls for IEEE Journals}

\maketitle

\begin{abstract}
Soft errors in large VLSI circuits pose dramatic influence on computing- and memory-intensive neural network (NN) processing. Understanding the influence of soft errors on NNs is critical to protect against soft errors for reliable NN processing. Prior work mainly rely on fault simulation to analyze the influence of soft errors on NN processing. They are accurate but usually specific to limited configurations of errors and NN models due to the prohibitively slow simulation speed especially for large NN models and datasets. With the observation that the influence of soft errors propagates across a large number of neurons and accumulates as well, we propose to characterize the soft error induced data disturbance on each neuron with normal distribution model according to central limit theorem and develop a series of statistical models to analyze the behavior of NN models under soft errors in general. The statistical models reveal not only the correlation between soft errors and NN model accuracy, but also how NN parameters such as quantization and architecture affect the reliability of NNs. The proposed models are compared with fault simulation and verified comprehensively. In addition, we observe that the statistical models that characterize the soft error influence can also be utilized to predict fault simulation results in many cases and we explore the use of the proposed statistical models to accelerate fault simulations of NNs. According to our experiments, the accelerated fault simulation shows almost two orders of magnitude speedup with negligible simulation accuracy loss over the baseline fault simulations.
\end{abstract}

\begin{IEEEkeywords}
Neural Network Reliability, Fault Simulation, Fault Analysis, Statistical Fault Modeling
\end{IEEEkeywords}

%
\IEEEpeerreviewmaketitle

\section{Introduction} \label{sec:intro}
Recent years have witnessed the widespread adoption of neural networks in various applications \cite{pouyanfar2018survey}. Many of the applications such as autonomous driving, medical diagnosis, and robot-assisted surgery are safety-critical as failures in these applications can cause threats to human life and dramatic property loss \cite{muhammad2020deep} \cite{hashimoto2018artificial} \cite{o2019legal}. 
The reliability of neural network accelerators that are increasingly utilized for their competitive advantages in terms of performance and energy efficiency \cite{reuther2021ai} \cite{reuther2019survey} becomes critical to these applications, and must be evaluated and verified comprehensively to ensure the application safety.  

With the continuously shrinking semiconductor feature sizes and growing transistor density, the influence of soft errors on large-scale chip designs becomes inevitable \cite{dixit2011impact} \cite{shafique2020robust}. A variety of analysis work have been conducted to investigate the influence of soft errors on neural network execution reliability from distinct angles recently \cite{Ares2018} \cite{Li2017understanding} \cite{xu2020persistent} \cite{xu2021reliability} \cite{xue2022winograd} \cite{torres2017fault} \cite{humbatova2020taxonomy} \cite{mittal2020survey} \cite{dos2019reliability} \cite{FIdelity} \cite{liu2022special}. For instance, Brandon Reagen et al. \cite{Ares2018} investigated the relationship between fault error rate and model accuracy from the perspective of models, layers, and structures. Guanpeng Li et al. \cite{Li2017understanding} experimentally evaluated the resilience characteristics of deep neural network systems (i.e., neural network models running on customized accelerators) and particularly studied the influence of data types, values, data reuses, and types of layers on neural network resilience under soft errors, which further inspires two efficient protection techniques against soft errors. Yi He et al. \cite{FIdelity} had the major neural network accelerator architectural parameters considered to obtain more accurate fault analysis. Dawen Xu et al. \cite{xu2020persistent} \cite{xu2021reliability} explored the influence of persistent faults on FPGA-based neural network accelerators with hardware emulation. 

Despite the efforts, they mainly rely on a large number of fault simulation on either software or FPGAs with limited fault injection configurations. The simulation based analysis is relatively accurate on specific neural network models and fault configurations, but the simulation remains rather limited compared to the entire large design space. Hence, there is still a lack of generality using the simulation based fault analysis. In fact, some of the simulation results may lead to contradictory conclusions. For instance, the experiment results in Ares \cite{Ares2018} demonstrate that the model accuracy of typical neural networks drops sharply when the bit error rate reaches $1 \times 10 ^{-7}$ while the experiments in \cite{SNR2021} reveal that the model accuracy starts to drop when the bit error rate is larger than $1 \times 10^{-5}$. In fact, both experiments are correct and the difference is mainly caused by the distinct quantization setups. Although this can be fixed with more comprehensive fault simulations, the total number of fault simulations can increase dramatically given more analysis factors, which will lead to rather expensive simulation overhead accordingly. Hence, more general analysis approaches are demanded to gain sufficient understanding of the influence of soft errors on neural networks. 

Moreover, the simulation based fault analysis can be extremely time-consuming under real-world applications with large neural networks and datasets, which hinders its use in practice. Take neural network vulnerability analysis that locates the most fragile part of an neural network to facilitate selectively protection against soft errors as an example. Suppose we need to select the most fragile $k$ layers of a neural network with $N$ layers. A straightforward simulation based approach needs to conduct $C_{N}^{k}$ experiments on the target test dataset. When $N=153, K=5$, $C_{N}^{k} = 654045930$. Assume the neural network fault simulation speed is 50 frame per second (fps) and 1000 samples are required for accuracy estimation. The evaluation of a single configuration takes 20s and the full evaluation takes around 420 years, which generally can not be afforded. As a result, many existing solutions \cite{xue2022winograd} can only adopt heuristic algorithms to address this problem approximately.

In order to achieve more efficient fault analysis of soft errors on neural network execution, we develop a statistical model to gain insight of the influence of soft errors on neural network models with much less experiments or even no experiments. The basic idea is to view the soft errors randomly distributed across the neural network processing via statistical analysis and investigate the influence of soft errors on the neural network model accuracy. Specifically, we utilize a normal distribution model to characterize the distribution of the neurons in neural networks according to central limit theorem and analyze the computing error distribution induced by the random soft errors first. On top of the models, we further investigate the influence of neural network depth, quantization, classification complexity on the resilience of neural networks under soft errors. At the same time, we verify the proposed modeling and analysis with fault simulation. Finally, we further leverage the statistical models to accelerate the time-consuming fault simulation by performing fault analysis with intermediate data rather than model accuracy directly.  

The contributions of this work can be summarized as follows.
\begin{itemize}
    \item We propose a series of statistical models to characterize the influence of soft errors on neural network processing for the first time. The models enable relatively general analysis of neural network model resilience under soft errors.
    
    \item We leverage the statistical models to investigate how the major neural network parameters such as quantization, number of layers, and number of classification types affect the neural network resilience, which can be utilized to guide the fault-tolerant neural network design. 
    
    \item With the proposed statistical models, we can also accelerate conventional fault simulation of neural network processing under soft errors by almost two orders of magnitude through simplifying the fault injection and replacing model accuracy analysis with more cost-effective intermediate parameter analysis.
    
    \item We validate the proposed model based soft error influence analysis of neural networks and demonstrate significant fault simulation acceleration with comprehensive experiments.
\end{itemize}
The rest of this paper is organized as follows. Section 2 briefly introduces prior fault analysis of neural network models. Section 3 illustrates the proposed statistical models for neural network reliability analysis under soft errors. Section 4 presents the use of the proposed statistical models to characterize the influence of neural network parameters on neural network resilience over soft errors. Section 5 mainly demonstrates how the proposed statistical models can be utilized to accelerate the fault simulation of neural networks under soft errors. Section 6 concludes this paper.




\section{Related Work}
Fault simulation is key to understand the influence of hardware faults on the neural network processing and is the basis for fault-tolerant neural network model and accelerator designs \cite{liu2022special} \cite{FIdelity} \cite{PytorchFI} \cite{TensorFI} \cite{SASSIFI} in various application scenarios. For instance, fault simulations in \cite{xue2022winograd} \cite{xu2021r2f} are utilized to investigate the vulnerability of neural networks and accelerators, which enables selectively hardware protection against various hardware faults with minimum overhead. Fault simulations in \cite{Ares2018} \cite{pandey2019greentpu} \cite{reagen2016minerva} are applied to investigate the design trade-offs between model accuracy loss and computing errors, which can be leveraged for energy-efficient neural network accelerator design through approximate computing and voltage scaling. Hence, a variety of fault simulation work have been developed in the past few years \cite{PytorchFI} \cite{TensorFI} \cite{xu2021r2f} \cite{gao2022soft} \cite{li2020ftt} \cite{ning2021ftt} \cite{liu2021hyca} \cite{zhang2018analyzing} \cite{chen2019BinFI} \cite{Zhen2021MindFI}. They can generally be divided into two categories depending on the fault simulation abstraction layers. 

First, neuron-wise fault simulation that injects faults to neurons or weights are mostly widely adopted in prior works \cite{liu2021hyca} \cite{PytorchFI} \cite{TensorFI} \cite{xu2021r2f} \cite{gao2022soft} \cite{li2020ftt} \cite{ning2021ftt} \cite{zhang2018analyzing} \cite{reagen2016minerva} \cite{Ares2018}  \cite{Li2017understanding} and have been verified according to \cite{Ares2018}. Although faults are originated from the underlying computing engines, these simulation frameworks typically adopt abstract bit-flip or stuck-at faults and include little hardware architecture details. To further improve the fault analysis precision, Xinghua Xue et al.\cite{xue2022winograd} developed an operation-level fault analysis framework such that hardware faults are injected to basic operations such as multiplication and accumulation, which is utilized to explore the influence of winograd convolution on resilience of neural network processing. Yi He et al. \cite{FIdelity} had neural network accelerator architectural parameters combined with high-level simulation of neural network processing with transient faults to achieve both high-fidelity and high-speed resilience study of general neural network accelerators. In summary, the above fault simulation work are mostly built on existing deep learning frameworks such as PyTorch and TensorFlow and are flexible for various fault simulations while the parallel processing capability can be negatively affected by the low-level fault injection substantially. 

The other category is circuit-layer fault simulation that typically conducts fault simulation on circuit designs at either gate level or RTL level. It is already well-supported by commercial EDA tools like TetraMAX and can achieve high simulation precision, but it can be extremely slow for neural network accelerators that include a larger number of transistors. An alternative approach is fault emulation that conducts fault simulation on FPGAs \cite{libano2018selective} \cite{gambardella2019efficient} \cite{xu2020persistent} \cite{xu2021reliability}. Similarly, NVIDIA SASSIFI\cite{SASSIFI} developed a fault injection mechanism for GPU and can be utilized for rapid fault analysis of neural networks on GPUs. Basically, these fault simulation frameworks greatly improve the fault simulation speed but rely on specific hardware prototypes and architectures which are usually difficult to scale and modify. 

Despite the efforts, simulation-based fault analysis is mainly applicable to specific setups in terms of neural network models, target hardware architectures, and fault configurations. A comprehensive fault analysis requires a huge number of fault simulations as discussed in Section \ref{sec:intro} which is prohibitively expensive. As a result, the fault analysis generality is usually limited. In fact, because of the limited fault simulation setups, some of the simulation based fault analysis even produces inconsistent results. For instance, Brandon Reagen et al. \cite{Ares2018} concluded that the resilience of different layers of the neural network may vary up to 2781$\times$. Nevertheless, Subho S. Banerjee et al. \cite{Bayesian2019} revealed a different conclusion based on the Bayesis fault injection and analysis. The problem poses significant demands for more general and faster fault analysis. 
\section{Soft Error Induced Neural Network Computing Error Modeling}
\label{sec:modeling}
In this work, we mainly analyze the influence of soft errors on neural network processing with modeling to gain insight of the neural network fault tolerance and guide the fault-tolerant design of neural network models and accelerators. Soft errors induced computing errors propagate rapidly across layers of neural networks and the influence of the different soft errors is accumulated on neurons of the neural network. Basically, the influence of random soft errors are distributed and accumulated on a large number of neurons. Hence, it can be characterized with a normal distribution model according to central limit theorem. With the distribution model, we can further estimate the neural network outputs and the model accuracy loss eventually, which can be fast and general as well. 

\subsection{Model Notations} \label{sec:notations}
Neural networks can be considered as multi-layer non-linear transformation and the transformation in a layer $l$ can be formulated as Equation \ref{eq:layer-trans} where $f_l$ represents the transformation operation, $\boldsymbol{}{x}_l$ represents the input activations, $\boldsymbol{}{x}_{l+1}$ represents the output activations. Particularly, for convolution neural networks, $\boldsymbol{w}_l$ represents weights in layer $l$, $*$ denotes convolution or full connection, $\boldsymbol{b}_l$ is the bias, and $\varphi$ represents a non-linear activation function. 

\begin{equation} \label{eq:layer-trans}
\boldsymbol{x}_{l+1}=f_{l}(\boldsymbol{x}_l)=\varphi(\boldsymbol{x}_l * \boldsymbol{w}_l + \boldsymbol{b}_l)
\end{equation}

While soft errors may happen in any layer of an neural network and propagate across the neural network layers, we utilize Equation \ref{eq:yf} to characterize the relation between input activations in layer $l$ and the output activations of the layer that is $m$ layers behind to facilitate the fault analysis. Note that $\boldsymbol{x}_l$ denotes input activations in layer $l$ and $F_l^{l+m}(\boldsymbol{x}_l)$ denotes output activations of layer $l+m$.

\begin{equation} \label{eq:yf}
F_l^{l+m}(\boldsymbol{x}_l)=f_{l+m}(f_{l+m-1}(\cdots f_{l+1}(f_l(\boldsymbol{x}_l))\cdots))
\end{equation}


Soft errors propagate along with layers of the neural network and can cause input variation on all the following layers. Suppose bit flip errors occur in weights or output activations at the $(l-1)$th layer. The induced variation at the $l$th layer is denoted as $\boldsymbol{\delta}_l$ and the variation at the $(l+a)$th layer is denoted as $\boldsymbol{\Delta}_l^{l+a}$. These variation can be calculated with \autoref{eq:feature-deviation}. For the variation of the overall neural network outputs induced by soft errors in layer $l$, we denote it as $\boldsymbol{\Delta}_l^{N}$ where $N$ refers to the total number of layers in the neural network and the notation is simplified as $\boldsymbol{\Delta}_{l}$ in the rest of this paper. 

\begin{equation}
\begin{split}
\boldsymbol{\Delta}_l^{l+1}&=\boldsymbol{x'}_{l+1}-\boldsymbol{x}_{l+1}=f_l(\boldsymbol{x}_l+\boldsymbol{\delta}_{x,l})-f_l(\boldsymbol{x}_l)\\
\boldsymbol{\Delta}_l^{l+2}&=\boldsymbol{x'}_{l+2}-\boldsymbol{x}_{l+2}=F_l^{l+1}(\boldsymbol{x}_l+\boldsymbol{\delta}_{x,l})-F_l^{l+1}(\boldsymbol{x}_l)\\
\vdots\\
\boldsymbol{\Delta}_l^{l+m}&=\boldsymbol{x'_{l+m}}-\boldsymbol{x_{l+m}}=F_l^{l+m}(\boldsymbol{x}_l+\boldsymbol{\delta}_{x,l})-F_l^{m}(\boldsymbol{x}_l)
\end{split}
\label{eq:feature-deviation}
\end{equation}


To quantize the soft error induced computing variation of the neural network, we utilized $RMSE_l=\|\boldsymbol{\Delta}_l/n\|_2=\sqrt{\mathrm{var}(\boldsymbol{\Delta}_l)}$ as a metric initially where $n$ is the vector length of the neural network output. However, RMSE is sensitive to the data range of activations that may vary over different layers of the same neural network, so we have RMSE further normalized and utilize RMSE Ratio(RRMSE) $RRMSE_l={RMSE_l}/{\sqrt{\mathrm{var}(\boldsymbol{x}_l)}}$ instead. The metric is more convenient to calculate compared to the model accuracy that relies on statistical results of a large number of samplings. The correlation between RRMSE and model accuracy will be illustrated in the rest of this section.

\subsection{Assumptions and Lemmas} \label{sec:assumption}
Recent work \cite{Weight2018}\cite{Weight2020} already demonstrated that the distribution of weights in neural networks fits well with t-Location scale distribution which is essentially a long-tail normal distribution. While output activations are generally accumulation of many weighted input activations. Suppose the input activations are random variables. Then, the output activations will be close to a normal distribution according to central limit theorem. Particularly, activations are usually close to zero to make full use of the non-linear activation function. Similarly, activation errors are also accumulation of multiple random errors propagated from neurons in upstream layers and belongs to a normal distribution. In summary, weights, activations of the neural network, and activation errors can all be approximated to normal distribution centered at zero and they can be formulated as follows.

\begin{equation}
\begin{split}
&w_l \sim N(0, \mathrm{var}(w_l))\\
&x_l \sim N(0, \mathrm{var}(x_l))\\
\end{split}
\end{equation}
\begin{equation}
\Delta_l \sim N(0, \mathrm{var}(\Delta_l)) \\
\end{equation}



To further verify the distribution of weights and activations in neural networks, we take LeNet on CIFAR-10 as an example.  \autoref{fig:lenet_weight_activation} shows the distributions of weights and activations on different layers of LeNet. It shows that the distribution is quite close to the fitted normal distribution model highlighted with orange color. We further have a bit error injected to a neuron in Conv2 of LeNet randomly. Then, we investigate the distribution of neuron errors in the following layers. Particularly, we take the first neuron in these layers as an example and fit the error distribution with a normal distribution model. The experiment result is shown in \autoref{fig:lenet_layer_error}. It can be observed that the neuron errors generally fit well with a normal distribution model centered at zero except that in Conv2 in which the input errors have not propagated comprehensively.

\begin{figure}[htbp]
    \centering
    \includegraphics[width=0.8\linewidth]{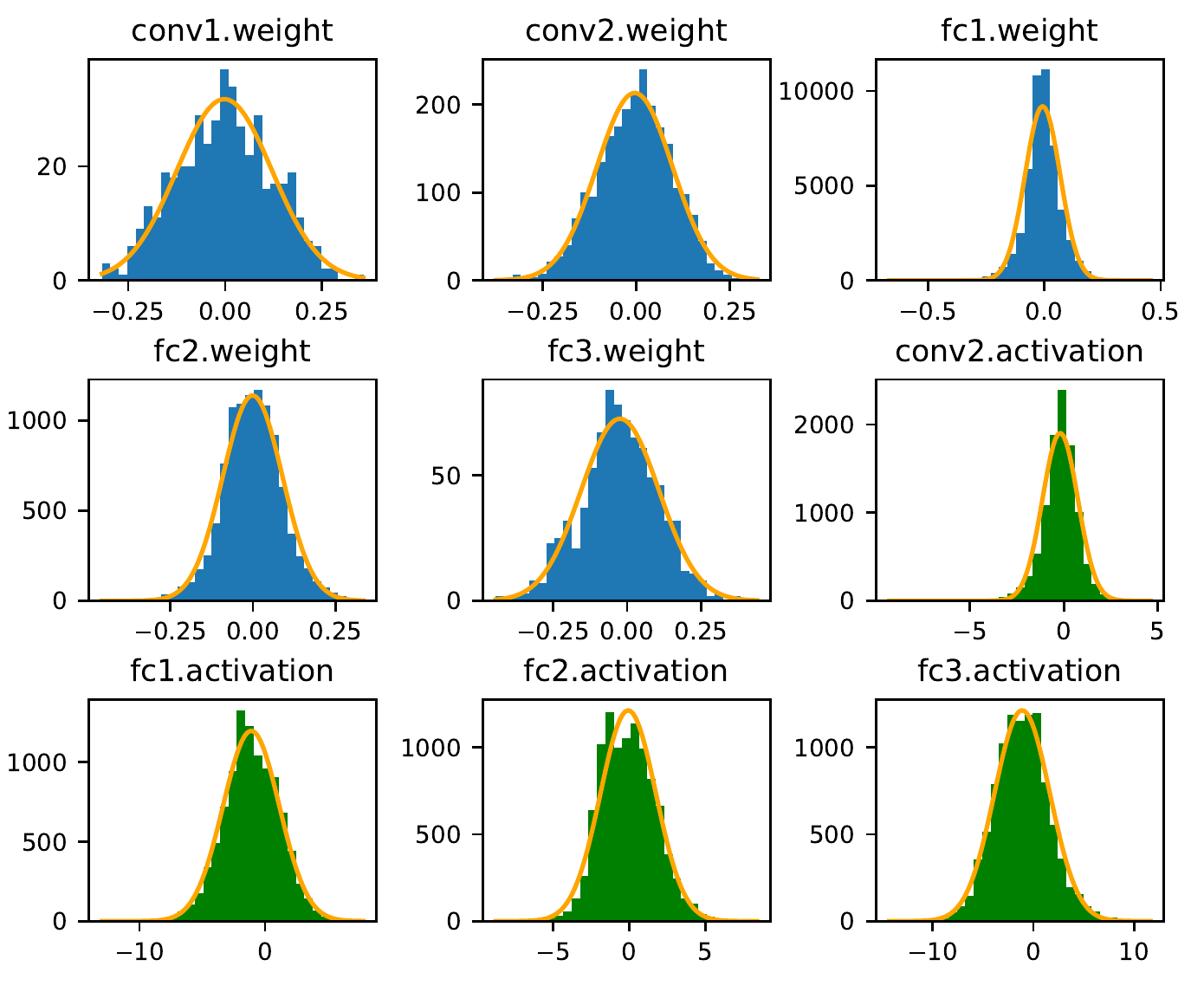}
    \caption{weights and activations in LeNet fit well with normal distribution models}
    \label{fig:lenet_weight_activation}
\end{figure}

\begin{figure}[htbp]
    \centering
    \includegraphics[width=0.7\linewidth]{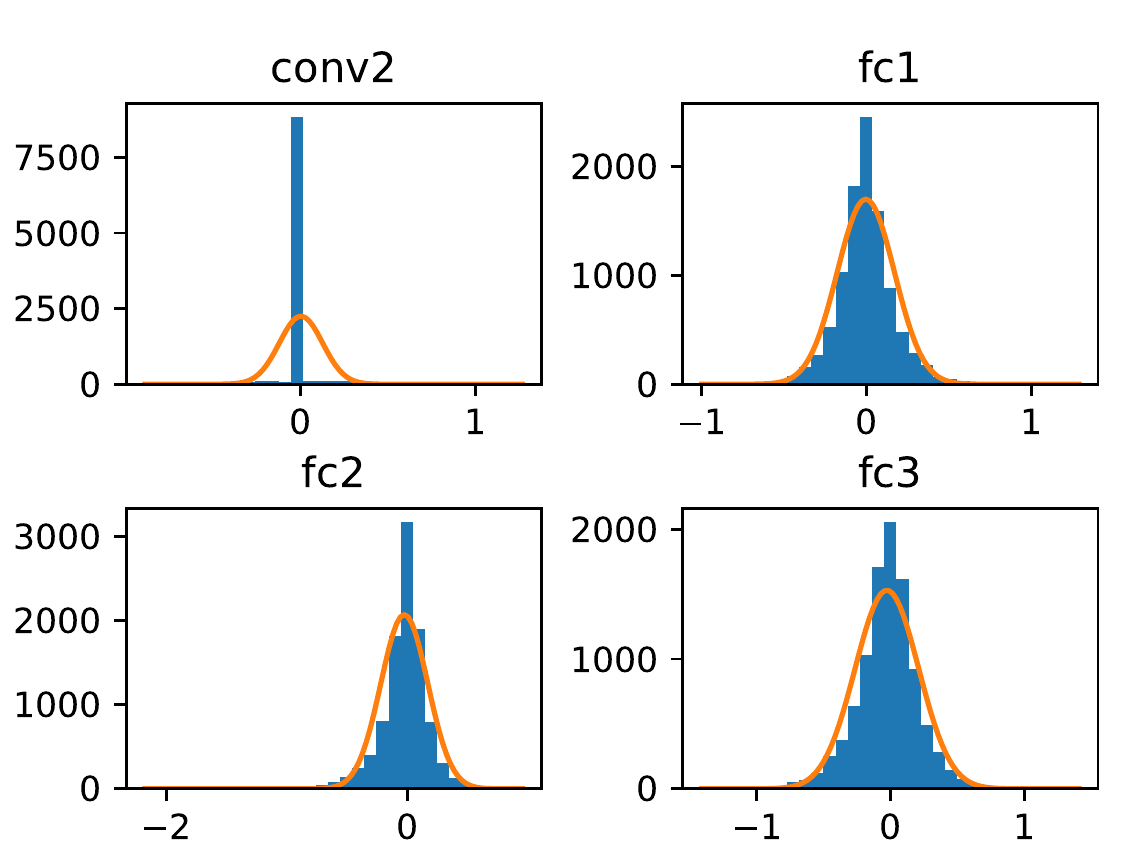}
    \caption{Error distribution of the first neuron of layers Conv2, fc1, fc2, fc3 in LeNet. Note that a single bit error is injected to an input activation of Conv2 randomly and we conduct the error injection multiple times to obtain the error distribution.}
    \label{fig:lenet_layer_error}
\end{figure}



In addition, we assume the distribution of the weights and activations are independently and identically to simplify the modeling in the rest of this work. With the above assumptions, we can further derive the following lemmas.
\begin{lemma}
For a layer with $n$ neurons that follow independently identical normal distribution, the variance of sum of the neurons in the layer can be approximated with $n\mathrm{var}(x)$.
\end{lemma}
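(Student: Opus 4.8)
The plan is to reduce the statement to the elementary additivity property of variance under independence, since the phrase ``independently identical normal distribution'' is exactly the hypothesis that makes this property available. Writing $S=\sum_{i=1}^{n} x^{(i)}$ for the sum of the $n$ neurons in the layer, I would start from the definition of the variance of a sum and expand it via the bilinearity of covariance.

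First I would write
\begin{equation}
\mathrm{var}(S)=\mathrm{var}\!\left(\sum_{i=1}^{n} x^{(i)}\right)=\sum_{i=1}^{n}\mathrm{var}(x^{(i)})+\sum_{i\neq j}\mathrm{cov}(x^{(i)},x^{(j)}).
\end{equation}
Next, I would invoke the independence assumption stated in Section~\ref{sec:assumption}: for independent random variables the pairwise covariances vanish, so the double-sum cross term drops out. Finally, I would use the identically distributed part of the hypothesis, namely that every neuron shares the common variance $\mathrm{var}(x)$, to collapse the remaining single sum, giving $\sum_{i=1}^{n}\mathrm{var}(x^{(i)})=n\,\mathrm{var}(x)$ and hence $\mathrm{var}(S)=n\,\mathrm{var}(x)$. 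The normality of the $x^{(i)}$ is not actually needed for the variance identity itself; it is carried along only because it is the distributional model assumed elsewhere, and I would note this so the reader does not expect a Gaussian-specific argument.

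The main obstacle, and the reason the lemma says \emph{approximated} rather than \emph{equals}, is the treatment of the covariance term. In a real neural network the activations within a layer are correlated, so the independence assumption is only an idealization and the cross term $\sum_{i\neq j}\mathrm{cov}(x^{(i)},x^{(j)})$ is small but not exactly zero. I would therefore frame the final step as an approximation justified by the i.i.d.\ modeling assumption: under strict independence the identity $\mathrm{var}(S)=n\,\mathrm{var}(x)$ holds exactly, and the residual correlations merely perturb it, which is consistent with the empirical near-normal, near-independent behavior observed in \autoref{fig:lenet_weight_activation} and \autoref{fig:lenet_layer_error}. The honest work of the proof is thus not algebraic but in stating clearly that the approximation error is precisely the discarded covariance mass, so that the scaling with $n$ is the content of the lemma.
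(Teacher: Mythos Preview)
Your argument is correct and is the standard textbook derivation: expand $\mathrm{var}\bigl(\sum_i x^{(i)}\bigr)$ via bilinearity of covariance, drop the cross terms by independence, and collapse the diagonal sum by the identically-distributed assumption. Your remark that normality plays no role in the variance identity, and that the word ``approximated'' reflects the idealized i.i.d.\ assumption rather than an inexact algebraic step, is also accurate.

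The only discrepancy with the paper is that the paper does \emph{not} actually supply a proof of this lemma at all. It states Lemma~1 as an elementary fact and then invokes it (together with \autoref{eq:XY-expect}) inside the proofs of Lemmas~2 and~3; the two numbered ``Proof'' environments in Section~\ref{sec:assumption} correspond to Lemmas~2 and~3, not to Lemma~1. So your write-up is more detailed than the paper's treatment, not a different route to the same destination; there is nothing to compare against beyond noting that the paper takes the result for granted.
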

\begin{lemma}
For an output activation that is an accumulation of weighted input neurons $x_{l+1}^{(j)}=\sum_{i=1}^m{x_l^{(i)}\cdot w_l^{(i)}}$, $\mathrm{var}(x_{l+1})$ can be calculated with $\mathrm{var}(x_{l+1})\approx m\mathrm{var}(x_{l})\mathrm{var}(w_l)$ when the influence of activation function is ignored according to Proof 1. Note that $m$ stands for the total number of accumulated operations for a single neuron calculation. $x_{l}^i$ and $w_l^{i}$ refer to an neuron and a weight in $(l+1)$th layer respectively.
\end{lemma}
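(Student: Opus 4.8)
The plan is to exploit the two structural facts already established in Section \ref{sec:assumption}: that the weights and activations are each approximately zero-mean normal variables, and that they are independently and identically distributed. Under these assumptions the claim collapses into a direct variance computation for a sum of products of independent random variables, so the argument is essentially algebraic once the probabilistic setup is granted.

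First I would analyze a single term of the summation. For a fixed index $i$, the product $x_l^{(i)} \cdot w_l^{(i)}$ is a product of two independent random variables, so its variance expands as $\mathrm{var}(x_l^{(i)} w_l^{(i)}) = E[(x_l^{(i)})^2 (w_l^{(i)})^2] - (E[x_l^{(i)} w_l^{(i)}])^2$. By independence the first expectation factors as $E[(x_l^{(i)})^2]\,E[(w_l^{(i)})^2]$, and because both factors are zero-mean the cross term satisfies $E[x_l^{(i)} w_l^{(i)}] = E[x_l^{(i)}]\,E[w_l^{(i)}] = 0$ while the second moments reduce to the variances, $E[(x_l^{(i)})^2] = \mathrm{var}(x_l)$ and $E[(w_l^{(i)})^2] = \mathrm{var}(w_l)$. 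This yields $\mathrm{var}(x_l^{(i)} w_l^{(i)}) = \mathrm{var}(x_l)\mathrm{var}(w_l)$ for every $i$. Next I would assemble the full sum: since the $m$ products are mutually independent and identically distributed by assumption, the variance of the sum is the sum of the variances, giving $\mathrm{var}(x_{l+1}) = \sum_{i=1}^m \mathrm{var}(x_l^{(i)} w_l^{(i)}) = m\,\mathrm{var}(x_l)\mathrm{var}(w_l)$, which is the claimed expression.

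The main subtlety, and the reason the statement is written with an approximation rather than an exact equality, lies in the assumptions rather than in the algebra above. The zero-mean normal approximation for activations and weights holds only in aggregate, and the products $x_l^{(i)} w_l^{(i)}$ are not perfectly independent in a trained network, so summing variances is only approximately valid. Most importantly, the derivation deliberately ignores the bias $\boldsymbol{b}_l$ and the non-linear activation $\varphi$ from Equation \ref{eq:layer-trans}. The bias, being a constant offset, contributes nothing to the variance and so is harmless, but the activation function clips or compresses the pre-activation distribution and thereby perturbs the identity. The hard part, if one wanted to upgrade the approximation into a rigorous error bound, would be quantifying how much $\varphi$ distorts the second moment; under the paper's modeling regime, where activations are kept close to zero to stay in the near-linear region of $\varphi$, I expect this distortion to be small, which is precisely what justifies replacing the equality with ``$\approx$'' rather than attempting to control it exactly.
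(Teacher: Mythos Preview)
Your proposal is correct and follows essentially the same route as the paper: the paper first isolates the variance of a single product via the identity $\mathrm{var}(XY)=\mathbb{E}[X^2]\mathbb{E}[Y^2]-\mathbb{E}^2[X]\mathbb{E}^2[Y]=\mathrm{var}(X)\mathrm{var}(Y)$ for independent zero-mean $X,Y$ (its Equation~\ref{eq:XY-expect}), and then appeals to Lemma~1 to sum the $m$ independent identically distributed terms, exactly matching your two-step decomposition. Your added discussion of why the result is stated as an approximation (bias, non-linearity, imperfect independence) is a welcome elaboration but not a departure in method.
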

\begin{lemma}
For an output activation error propagated from input activation errors, $\mathrm{var}(\Delta_{l+1})$ can be calculated with $\mathrm{var}(\Delta_{l+1})\approx m\mathrm{var}(\Delta_{l})\mathrm{var}(w_l)$ where $m$ represents the number of accumulation according to Proof 2.  
\end{lemma}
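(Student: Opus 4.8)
The plan is to mirror the argument of Proof~2, since the output activation error propagates through the same weighted-accumulation structure as the activations themselves. First I would express how a soft-error-induced input perturbation becomes an output error at the pre-activation level. For a single output neuron computed as $x_{l+1}^{(j)}=\sum_{i=1}^{m}x_l^{(i)}w_l^{(i)}$, perturbing the inputs by $\Delta_l^{(i)}$ while holding the weights fixed gives the perturbed pre-activation $\sum_{i=1}^{m}(x_l^{(i)}+\Delta_l^{(i)})w_l^{(i)}$; subtracting the clean value cancels the unperturbed term and leaves
\begin{equation}
\Delta_{l+1}^{(j)}=\sum_{i=1}^{m}\Delta_l^{(i)}w_l^{(i)}.
\end{equation}
This is exactly the form treated in Lemma~2, but with the activations $x_l^{(i)}$ replaced by the activation errors $\Delta_l^{(i)}$.

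Next I would compute the variance term by term. By the assumptions in Section~\ref{sec:assumption}, the errors $\Delta_l^{(i)}$ and weights $w_l^{(i)}$ are zero-mean and mutually independent, so each product has zero mean and
\begin{equation}
\mathrm{var}(\Delta_l^{(i)}w_l^{(i)})=E[(\Delta_l^{(i)})^2]\,E[(w_l^{(i)})^2]=\mathrm{var}(\Delta_l)\,\mathrm{var}(w_l),
\end{equation}
where the last step uses the zero-mean property to identify the second moments with the variances. Because the summands are independent under the i.i.d.\ assumption and each is zero-mean, the cross-covariances vanish and the variance of the sum equals the sum of the variances; summing the $m$ identical contributions yields $\mathrm{var}(\Delta_{l+1})\approx m\,\mathrm{var}(\Delta_l)\,\mathrm{var}(w_l)$, the claimed relation.

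Finally I would account for the non-linear activation $\varphi$, which is the only genuine source of the approximation sign. As in Proof~2, I would argue that over the near-zero operating range where activations are concentrated, $\varphi$ can be treated as approximately identity---or, in the small-error regime, linearized so that the propagated error is merely rescaled by a factor close to one---so that its effect is absorbed into the approximation. I expect this activation-function step to be the main obstacle: the weighted-sum variance computation is an exact and routine consequence of independence and the zero-mean assumption, and the real modeling content lies in justifying that the non-linearity does not substantially distort the error variance as it propagates across a layer.
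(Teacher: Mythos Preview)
Your proposal is correct and follows essentially the same argument as the paper's Proof~2: express $\Delta_{l+1}^{(j)}$ as $\sum_{i=1}^{m}\Delta_l^{(i)}w_l^{(i)}$ by cancelling the clean pre-activation, apply the product-variance identity $\mathrm{var}(XY)=\mathrm{var}(X)\mathrm{var}(Y)$ for independent zero-mean variables, and then sum the $m$ i.i.d.\ contributions via Lemma~1. Your extra paragraph on the activation function is a reasonable elaboration, though the paper simply absorbs that step into the approximation sign without comment, as in Lemma~2.
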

\begin{lemma}
With the second and the third lemmas, we can conclude that $RMSE_l/\sqrt{\mathrm{var}(x_l)} \approx  RMSE_{l+1}/\sqrt{\mathrm{var}(x_{l+1})}$. Hence, $RRMSE_l \approx RRMSE_{l+1}$ which indicates that $RRMSE_l$ keeps almost constant across different layers of an neural network.
\end{lemma}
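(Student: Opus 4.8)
The plan is to prove the fourth lemma by direct algebraic substitution rather than any fresh probabilistic argument: I would write $RRMSE_{l+1}$ as a ratio of variances, insert the recurrences supplied by the second and third lemmas into the numerator and denominator, and observe that the common propagation factor cancels, leaving exactly $RRMSE_l$.

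First I would unfold the definitions from the notation section, namely $RMSE_l=\sqrt{\mathrm{var}(\boldsymbol{\Delta}_l)}$ and $RRMSE_l=RMSE_l/\sqrt{\mathrm{var}(\boldsymbol{x}_l)}$, so that
\[
RRMSE_{l+1}=\frac{\sqrt{\mathrm{var}(\Delta_{l+1})}}{\sqrt{\mathrm{var}(x_{l+1})}}=\sqrt{\frac{\mathrm{var}(\Delta_{l+1})}{\mathrm{var}(x_{l+1})}}.
\]
Next I would substitute the third lemma, $\mathrm{var}(\Delta_{l+1})\approx m\,\mathrm{var}(\Delta_l)\,\mathrm{var}(w_l)$, into the numerator and the second lemma, $\mathrm{var}(x_{l+1})\approx m\,\mathrm{var}(x_l)\,\mathrm{var}(w_l)$, into the denominator. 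The essential point is that both lemmas carry the identical factor $m\,\mathrm{var}(w_l)$, because the activation error $\boldsymbol{\Delta}$ is propagated through the same weights and the same fan-in $m$ as the clean activation $\boldsymbol{x}$. Cancelling this shared factor gives $\mathrm{var}(\Delta_{l+1})/\mathrm{var}(x_{l+1})\approx\mathrm{var}(\Delta_l)/\mathrm{var}(x_l)$, and taking the square root yields $RRMSE_{l+1}\approx RRMSE_l$. Chaining this single-step relation from the layer where the fault originates through to the output layer then delivers the asserted near-constancy of RRMSE across layers.

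The main obstacle is not the cancellation itself but defending that the factor $m\,\mathrm{var}(w_l)$ is genuinely common to the signal and the error recurrences. I would stress that, under the independent-and-identically-distributed assumption of Section~\ref{sec:assumption}, the perturbation at layer $l$ passes through exactly the weights of layer $l$ with the same number of accumulated terms $m$ as the activations, which is what makes the two factors match and renders the cancellation clean. A secondary caveat concerns the non-linear activation function, which both the second and third lemmas explicitly set aside; I would argue that for a piecewise-linear activation such as ReLU the nonlinearity scales signal and error comparably, so its first-order contribution to the ratio cancels as well, leaving the approximation $RRMSE_l\approx RRMSE_{l+1}$ intact.
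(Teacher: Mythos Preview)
Your proposal is correct and matches the paper's approach: the paper states Lemma~4 as an immediate consequence of Lemmas~2 and~3 without spelling out the algebra, and your argument is precisely the cancellation of the shared factor $m\,\mathrm{var}(w_l)$ in $\mathrm{var}(\Delta_{l+1})/\mathrm{var}(x_{l+1})$ that the paper takes for granted. Your additional remarks on why the factor is common and on the activation function are reasonable elaborations but go slightly beyond what the paper itself records.
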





\begin{proof}
Suppose $X$ and $Y$ are two independent random variables following normal distribution, the variation of their product can be calculated with the product of their variation according to \autoref{eq:XY-expect}. Since $x_l^{i}$ and $w_l^{i}$ can be considered as independent variables following normal distribution, we can conclude $\mathrm{var}(x_{l+1})\approx m\mathrm{var}(x_{l})\mathrm{var}(w_l)$ according to \autoref{eq:XY-expect} and Lemma 1.
\end{proof}

\begin{equation}
\label{eq:XY-expect}
\begin{split}
    \mathrm{var}(X\cdot Y) &= \mathbb{E}[X^2Y^2]-\mathbb{E}^2[XY] \\
    &=\mathbb{E}[X^2]\mathbb{E}[Y^2]-\mathbb{E}^2[X]\mathbb{E}^2[Y]\\ 
    &=\mathrm{var}(X) \cdot \mathrm{var}(Y)
\end{split}
\end{equation}

\begin{proof}
An output activation error induced by neuron errors in previous layer can be formulated with \autoref{eq:neuron-error-cal}. Basically, an neuron error is essentially the accumulation of weighted input neuron errors. As mentioned, both neuron errors and weights can be characterized with normal distribution. In this case, the variation of an neuron error can also be calculated with $\mathrm{var}(\Delta_{l+1})\approx m\mathrm{var}(\Delta_{l})\mathrm{var}(w_l)$ according to \autoref{eq:XY-expect} and Lemma 1.
\end{proof}

\begin{equation}
\label{eq:neuron-error-cal}
\begin{split}
\Delta_{l+1}^{(j)}&=((\boldsymbol{x}_l+\boldsymbol{\Delta}_l)*\boldsymbol{w}_l-\boldsymbol{x}_l*\boldsymbol{w}_l)^{(j)}\\
&=(\boldsymbol{\Delta}_l*\boldsymbol{w}_l)^{(j)}\\
&=\sum_{i=1}^m{\Delta_l^{(i)}\cdot w_l^{(i)}}
\end{split}
\end{equation}

\subsection{Bit Flip Influence Modeling \label{sec:value_flip_model}}
To understand the influence of bit flip soft errors on neural network processing, we start to investigate the influence of bit flip on a single data. Take an activation $x$ quantized with int8 as an example, the quantized activation $x_Q$ can be represented with \autoref{eq:xq} where $bound$ represents the dynamic range of activations in a layer of the neural network. 
\begin{equation}
\label{eq:xq}
    x_Q=\lfloor \frac{128\cdot x}{bound} \rfloor
\end{equation}


In this section, we mainly investigate the influence of quantization $bound$ and the different fault injection methods on the resulting variations of activations. As mentioned before, we utilize RMSE as the metric to measure the activation variation caused by soft errors. Suppose the range of an activation is $[-bound, bound]$, a bit flip on most significant bit results in a change of $\pm bound$. Similarly, a bit flip on the following bit leads a change of $\pm bound/2$. In this case, the expected change of data i.e. $\sigma_\delta$ quantized with int8 and int16 given a random bit flip can be represented with \autoref{eq:int8-error-expectation} and \autoref{eq:int16-error-expectation} respectively. When we double the $bound$, $\sigma_\delta$ doubles. When the quantization data width doubles, $\sigma_\delta$ shrinks by $\sqrt{2}$. 

\begin{equation}
\label{eq:int8-error-expectation}
\sigma_{\delta_{int8}}=\sqrt{\frac{1}{8}\sum_{bit=0}^{7}{\left(\frac{bound}{2^{bit}}\right)^2}}\approx \frac{bound}{\sqrt{6}}
\end{equation}

\begin{equation}
\label{eq:int16-error-expectation}
\sigma_{\delta_{int16}}=\sqrt{\frac{1}{16}\sum_{bit=0}^{15}{\left(\frac{bound}{2^{bit}}\right)^2}}\approx \frac{bound}{\sqrt{12}}
\end{equation}


On top of the bit error influence of a single data, we scale the analysis to estimate the output data variation of a convolution operation induced by soft errors injected to either weights or activations. Suppose $ic$ refers to the number of input channel, $oc$ refers to the number of output channel, $H \times H$ stands for the size of a feature map, $K \times K$ refers to the size of a convolution kernel.

\textbf{Disturbance in weights:} Disturbance in a single weight affects output activations of an entire channel, i.e. $H^2$ activations. According to definition $RMSE = \|\boldsymbol{\Delta}/n\|_2$ and Lemma 2 in Section \ref{sec:assumption}, the average variation of weights i.e. $var(w)$ depends on the amount of input activations $m=K^2\times ic$. $var(w) =  \frac{1}{m} var(x_{l+1})/var(x_l) \approx 1/m = 1/(K^2\times ic)$. So the average computing variation of output activations can be calculated with Equation \ref{eq:estimate_weight}.

\begin{equation}\label{eq:estimate_weight}
\begin{aligned}
RMSE_w &= \|\boldsymbol{\Delta}/n\|_2\\
&= \sqrt{\frac{H^2 \times \Delta^2}{H^2 \times oc}} \\ 
&= \sqrt{\frac{H^2\sigma_\delta^2 var(w)}{H^2 \times oc}}\\
&=\frac{\sigma_\delta}{K\sqrt{ic\times oc}}
\end{aligned}
\end{equation}

\textbf{Disturbance in activations:} Suppose the average value of an activation in a layer is $\sigma_a$. Disturbance in a single activation will affect a window ($K^2$) of output activations in all the $oc$ channels, i.e. $oc\times K^2$ activations. Suppose the feature map size is $H \times H$, then the average influence of the disturbance on an output activation can be calculated with Equation \ref{equ:act-est}. 

\begin{equation}
\label{equ:act-est}
\begin{aligned}
RMSE_a &= \|\boldsymbol{\Delta}/n\|_2\\ 
&= \sqrt{\frac{oc\times K^2 \times \sigma_\delta^2 var(w) }{H^2\times oc}}\\
&=\frac{\sigma_\delta}{H\sqrt{ic}}
\end{aligned}
\end{equation}

Since the variation of activations and weights is generally constant given a specific model and data set according to the assumptions in Section \ref{sec:assumption}, RRMSE of a convolution layer $l$ that can be calculated with $RRMSE_l={RMSE_l}/{\sqrt{\mathrm{var}(\boldsymbol{x}_l)}}$ is consistent with RMSE accordingly.

\subsection{Relation between RRMSE and Classification Accuracy \label{sec:model_RMSE_acc}}
Since the model accuracy of a typical classification task is based on statistics of a number of classification tasks and it is difficult to formulate with neural network computing directly, we utilize RRMSE defined in \autoref{sec:notations} as an alternative metric to measure the influence of soft errors on model accuracy metric which is more closely related with neural network computing. 

To begin, we start with a simple binary classification task and it includes only two output neurons in the last layer of the network. Then, the classification depends on larger output neuron. Suppose the two output neurons are $y^{(0)}$ and $y^{(1)}$, and assume $y^{(0)}>y^{(1)}$ without loss of generality. When there are random errors injected to the neural network, the two output neurons follow normal distribution accordingly and they can be formulated with $Y^{(0)}\sim N(y^{(0)}, \mathrm{var}(\Delta_y)$, $Y^{(1)}\sim N(y^{(1)}, \mathrm{var}(\Delta_y))$ according to Section \ref{sec:assumption}. In this case, the probability of wrong classification is essentially that of $Y^{(0)}<Y^{(1)}$. The distribution of $Y^{(0)}<Y^{(1)}$ can be formulated with \autoref{eq:error-dist}, which is a shifted and scaled normal distribution model. Hence, the probability when $Y^{(0)}-Y^{(1)}>0$ can be calculated with \autoref{eq:gaussian-prob} where $\mathrm{normcdf}$ stands for  cumulative distribution function of normal distribution.

\begin{equation}
\label{eq:error-dist}
Y^{(0)}-Y^{(1)}\sim N(y^{(0)} - y^{(1)}, 2\mathrm{var}(\Delta_y))
\end{equation}

\begin{equation} \label{eq:gaussian-prob}
\begin{aligned}
Acc &= \mathrm{normcdf}\left(\frac{y^{(0)} - y^{(1)}}{\sqrt{2\mathrm{var}(\Delta_y)}}\right)\\
&= \frac{1}{2}\mathrm{erf}\left(\frac{y^{(0)} - y^{(1)}}{2\sqrt{\mathrm{var}(\Delta_y)}}\right)+\frac{1}{2}\\
\end{aligned}
\end{equation}

While $\mathrm{var}(\mathbf{y})=\frac{1}{4}(y^{0}-y^{1})^2$ in a typical binary classification task, \autoref{eq:gaussian-prob} can be converted to Equation \ref{eq:accuracy-RMSE} according to the definition of $RMSE$ and $RRMSE$ in Section \ref{sec:notations}.
\begin{equation} \label{eq:accuracy-RMSE}
\begin{aligned}
Acc 
&= \frac{1}{2}\mathrm{erf}\left(\frac{2\sqrt{\mathrm{var}(\mathbf{y})}}{2RMSE_y}\right)+\frac{1}{2}\\
&= \frac{1}{2}\mathrm{erf}\left(\frac{1}{RRMSE_y}\right)+\frac{1}{2}
\end{aligned}
\end{equation}

\begin{figure*}
\begin{equation} \label{eq:accuracy-RMSE-multi}
\begin{aligned}
Acc(RRMSE)
&= \int_{-\infty}^{\infty}{\mathrm{normpdf}(x)\cdot\left(\mathrm{normcdf}(x+1/RRMSE)\right)^{nc-1}dx}\\
&= \int_{-\infty}^{\infty}{\frac{1}{\sqrt{2\pi}}e^{-\frac{x^2}{2}}\cdot\left(\int_{-\infty}^{t}{\frac{1}{\sqrt{2\pi}}e^{-\frac{(t+1/RRMSE)^2}{2}}dt}\right)^{nc-1} dx}
\end{aligned}
\end{equation}
\end{figure*}


\begin{figure*}
\begin{equation}
\label{eq:empirical-accuracy-RMSE-multi}
\begin{aligned}
Acc(RRMSE)
&= \frac{(1+e^{-ms})(Acc_{clean}+nc^{-1})}{1+e^{s(RRMSE-m)}}+nc^{-1}
\end{aligned}
\end{equation}
\vspace{-1em}
\end{figure*}

\begin{figure}[htbp]
    \centering
    \subfigure[Single Image Analysis]{
        \includegraphics[width=1\linewidth]{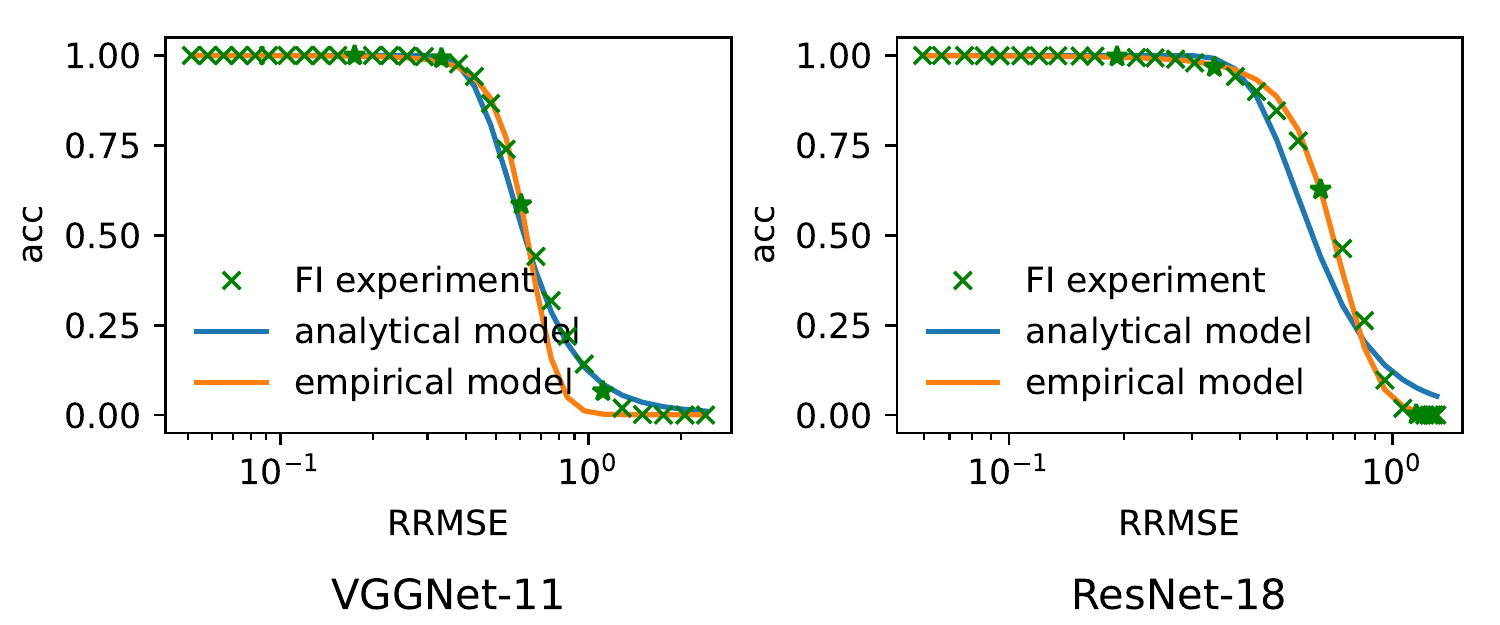}
    }
    \subfigure[Multiple Image Analysis]{
        \includegraphics[width=1\linewidth]{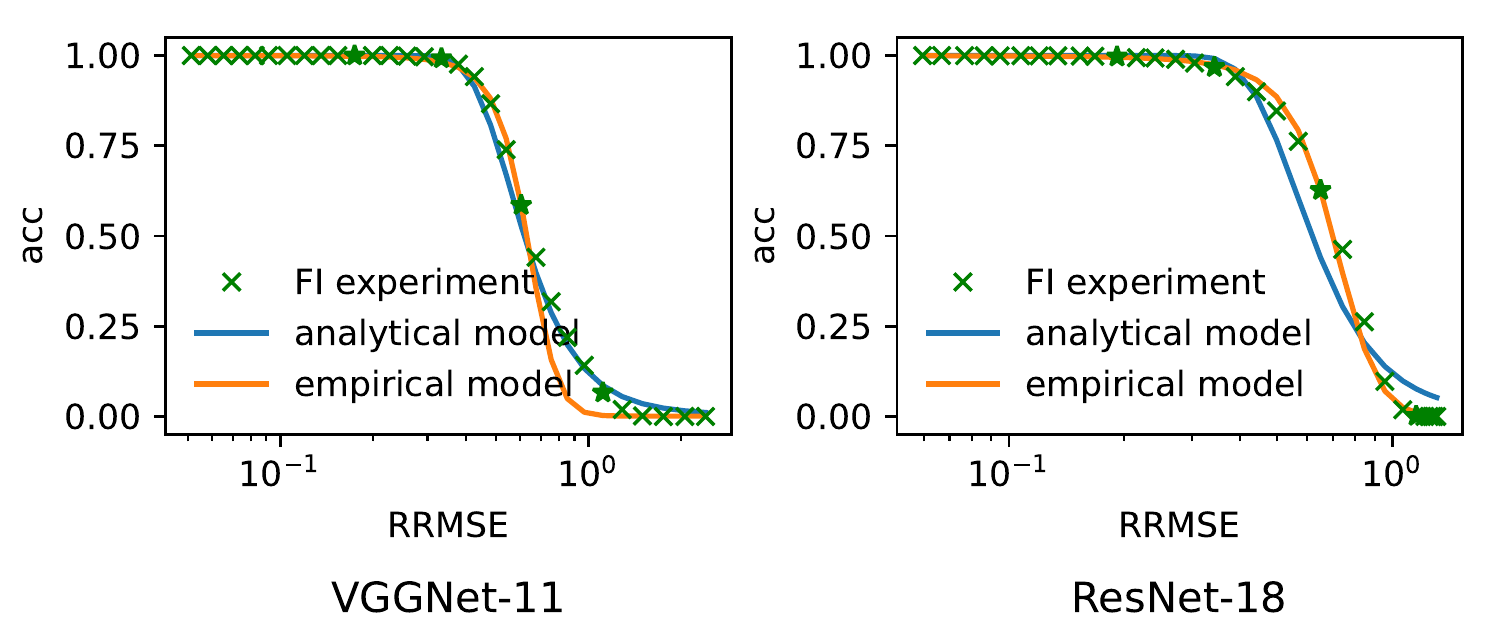}
    }
    \caption{Analytical model and empirical model comparison on both single image analysis experiment and multiple image analysis experiment.}
    \label{fig:exp_rmse_acc}
    \vspace{-1.5em}
\end{figure}

For multi-class classification models, suppose there are $nc$ classification types and the corresponding outputs of the classification model are denoted as $y(i), i=(0, 1, 2, ..., nc-1)$. Assume the expected output is $y(0)$ which is larger than any other output $y(i), i=(1, 2, ..., nc-1)$. Similar to the binary classification problem, a classification error happens when any of the outputs $y(i), i=(1,2 , ..., nc-1)$ is larger than $y(0)$ given Gaussian disturbance i.e. $RRMSE$ according to the definition of RRMSE. In this case, the model accuracy subjected to errors is the integration of the multiplication of the probability density function of $y(0)$ and the cumulative distribution function of the rest outputs $y(i), i=(1, 2, ..., nc-1)$. It can be calculated with Equation \ref{eq:accuracy-RMSE-multi}. 

Since the analytical model is usually difficult to calculate, we replace it with an empirical model as shown in Equation \ref{eq:empirical-accuracy-RMSE-multi}. It is essentially an sigmoid function variant and has two parameters i.e. $m$ and $s$ included. When there are no errors, the output of Equation \ref{eq:empirical-accuracy-RMSE-multi} is the accuracy of a clean neural network model. When there are too many errors, the output of Equation \ref{eq:empirical-accuracy-RMSE-multi} becomes $1/nc$ which represents classification accuracy of random guessing. The empirical model can be determined given very few sampling data points of RRMSE and the corresponding classification accuracy.

To verify the proposed models of correlation between RRMSE and neural network classification accuracy, we take VGGNet-11 and ResNet-18 on ImageNet as typical neural network examples to compare the analytical models and empirical models to the ground truth results. Particularly, we have two different evaluation approaches performed for the model comparison. In the first approach, we have a single true positive images from ImageNet utilized and repeat the execution for 10000 times on each error injection rate setup. It is denoted as single image analysis and it removes the influence of image variations from the analysis. In the second approach, we have 10000 different images randomly selected from ImageNet and evaluated for each error injection rate setup. It is denoted as multiple image analysis and it has the image variations incorporated in the analysis. For the soft error injection, we have 31 different bit error rate (BER) setups ranging from 1E-7 to 1E-4 conducted in the experiment. Ground truth of RRMSE and classification accuracy can be obtained from experiments directly. Analytical model can be determined given the RRMSE and $nc$ while the empirical model can be determined with fitting on 4 data points evenly selected from ground truth data. The comparison is shown in Figure \ref{fig:exp_rmse_acc}. It can be observed that the proposed analytical model is close to the ground truth data in the single image analysis setup but it has variation under multiple image analysis setup. This is expected as the proposed analytical model fails to characterize the influence of image variations. In contrast, empirical model fits much better on both single image analysis and multiple image analysis despite the lack of explainability. Nevertheless, it is clear that the model accuracy decreases monotonically with the increase of RRMSE, which allows us to characterize the influence of soft errors with RRMSE which can be obtained more conveniently compared to model accuracy.

\subsection{Error Influence Aggregation} \label{sec:estimate_directly}
In this section, we mainly explore how the influence of different errors aggregate on the same output neurons. Suppose $RRMSE$ represents RRMSE of an neural network output, it is the accumulation of $n$ independent random errors propagated from different layers. As mentioned, the influence of the random errors follows normal distribution model. The accumulation of these independent random errors can be formulated with Equation \ref{equ:combine} where  $RRMSE_{(l, i)}$ denotes the RRMSE induced by a neuron faults on layer $l$. As the number of neurons in a neural network is extremely large, it remains timing consuming and inefficient to conduct neuron-wise fault analysis. To address the problem, we have Equation \ref{equ:combine} further converted to layer-wise fault analysis where $RRMSE_{(l)}$ denotes RRMSE of neural network output caused by all the errors in layer $l$ and $L$ denotes the total number of neural network layers.
\begin{equation}\label{equ:combine}
RRMSE=\sqrt{\sum_{l=1}^L{\sum_{i=1}^{n_l}{RRMSE_{(l, i)}^2}}}=\sqrt{\sum_{l=1}^L{RRMSE_{(i)}^2}}
\end{equation}

To verify the error aggregation model, we conduct layer-wise fault injection on VGGNet-11 and Resnet-18 quantized with int8 at different error injection rate $BER\in\{0, 1e-5, 2e-5\}$. Then, we randomly select 32 different combination of layer-wise error injection configurations and compare the ground truth $RRMSE$ with that estimated with Equation \ref{equ:combine}. The comparison shown in Figure \ref{fig:layerwise_combination} reveals that the error aggregation model fits well with the simulation results and confirms the effectiveness of the model.

\begin{figure}[htbp]
    \centering
    \subfigure[VGGNet-11]{
        \includegraphics[width=0.45\linewidth]{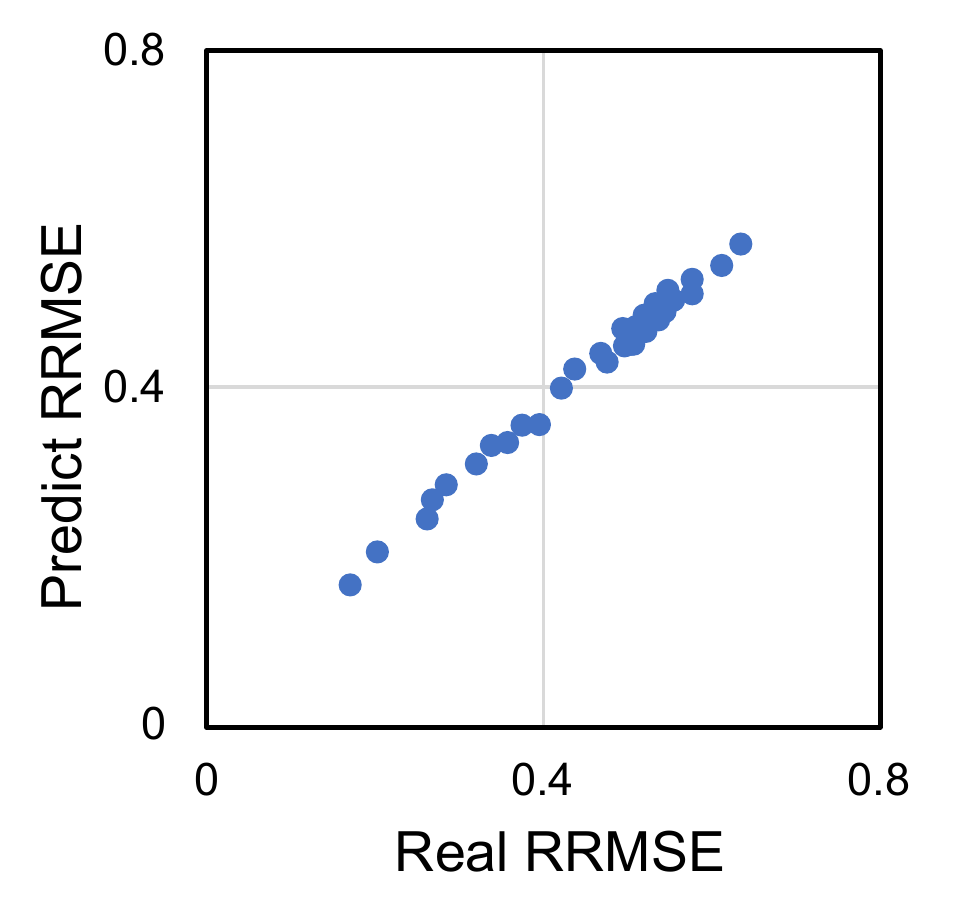}
    }
    \subfigure[ResNet-18]{
        \includegraphics[width=0.45\linewidth]{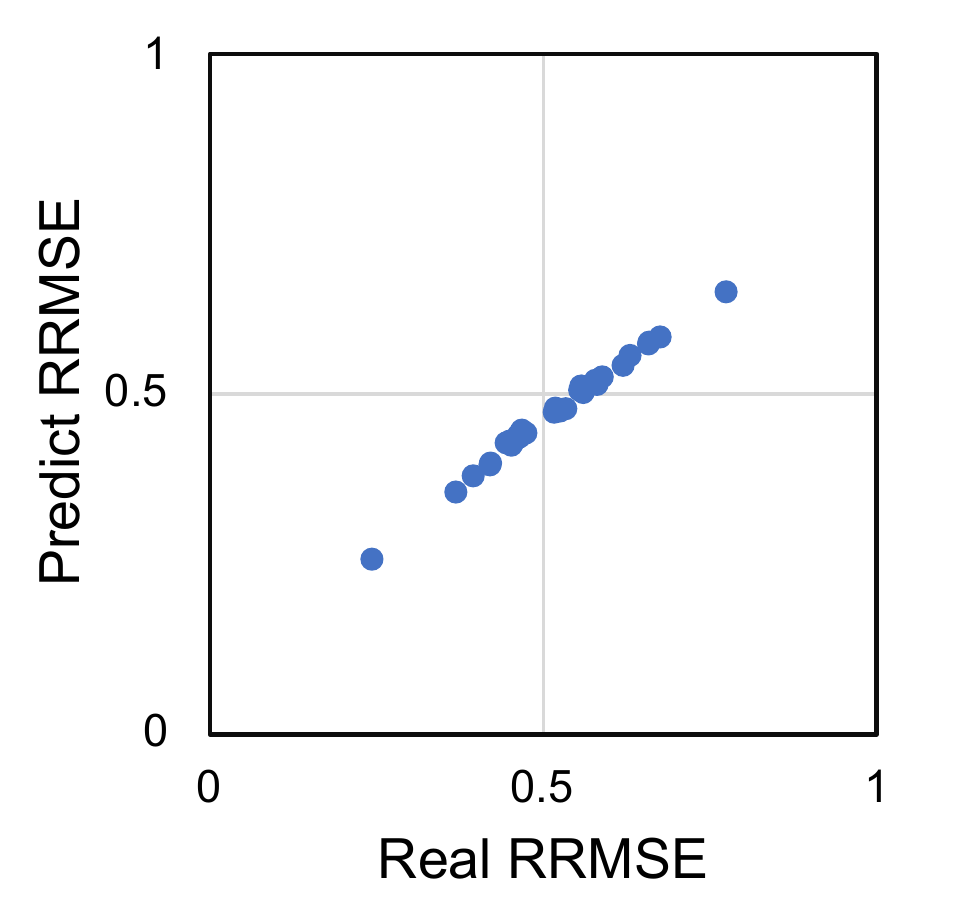}
    }
    \caption{Comparison of ground truth RRMSE and that estimated with error aggregation model.}
    \label{fig:layerwise_combination}
\end{figure}

\section{Modeling for Neural Network Resilience Analysis}
With the above modeling of soft error influence on neural network accuracy, we can further utilize the models to analyze the influence of the major neural network design parameters on the resilience of neural networks subjected to soft errors, which can provide more general analysis compared to simulation based approaches. Specifically, we investigate influence of neural network layers, quantization, and number of classification types respectively on neural network resilience and they will be illustrated in detail in this section.

\subsection{Influence of Number of Layers on NN Resilience} \label{sec:analyze_layer}
According to Lemma 4 in Section \ref{sec:assumption}, the output error metric of the $l$th layer i.e. $RRMSE_l$ is almost constant across different layers of a neural network, which means that neural network depth does not have direct influence on neural network resilience. In order to verify this, we have random bit flip errors injected to neurons in different layers of VGGNet-11 and ResNet-18 respectively. The bit error rate is set to be 1E-6 in this experiment. Then, we evaluate RRMSE of the following layers of the neural networks over the corresponding golden reference output. RRMSE of the neural network layers is shown in Figure \ref{fig:layer-prop}. It can be seen that RRMSE on different layers varies in a small range in general for each specific fault injection despite the layer locations of the error injection. Basically, it demonstrates that the influence of soft errors remains steady across the different layers and neural network depth will not affect the fault tolerance of the neural network model in general. It is true that small variations rather than constant as estimated in Lemma 4 can be found in RRMSE of different layers in Section \ref{sec:assumption}. This is mainly caused by the non-linear operations in neural networks which may filter out the computing errors and affect the error propagation. 

\begin{figure}[tb] \centering 
    \subfigure[VGGNet-11] {\label{fig:vgg11_prop}
        \includegraphics[width=0.8\columnwidth]{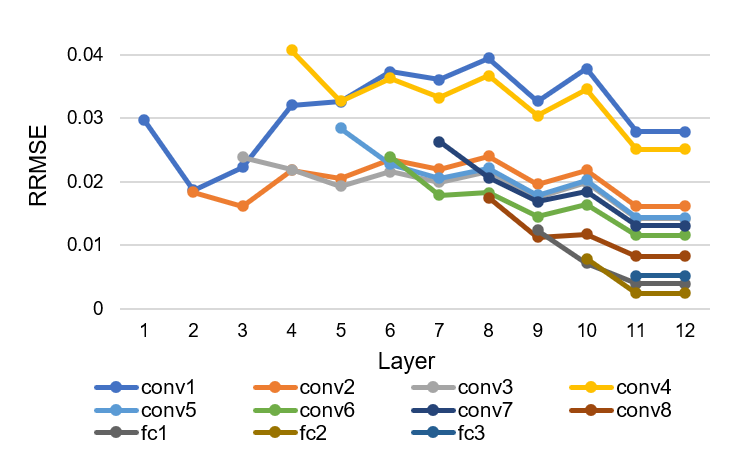} 
    }  
    \subfigure[ResNet-18] { \label{fig:ResNet-18_prop} 
        \includegraphics[width=0.9\columnwidth]{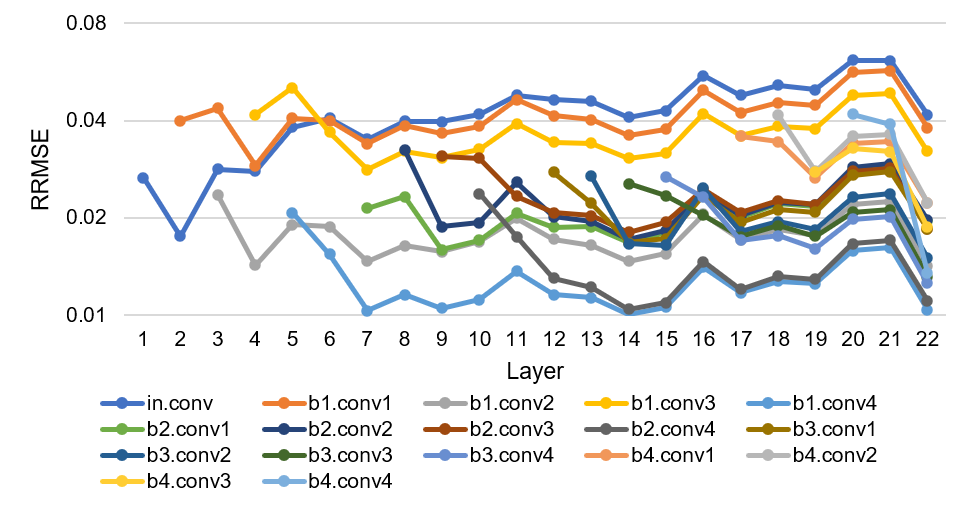} 
    } 
\caption{Error Propagation Across the Layers. Color of the lines refers to experiments with different initial error injection layers.} 
\vspace{-1em}
\label{fig:layer-prop} 
\end{figure}




\subsection{Influence of Quantization on NN Resilience}\label{sec:analyze_quan}

According to the analysis in Section \ref{sec:model_RMSE_acc}, we notice that quantization bound has straightforward influence on data variation induced by a single bit flip and affects RRMSE accordingly. Meanwhile, neural network models can choose different quantization bound with little accuracy loss in practice, so it can be expected that smaller quantization bound can improve the neural network model reliability subjected to soft errors without accuracy penalty. Figure \ref{fig:bound_RMSE} presents RRMSE of VGGNet-11 and ResNet-18 subjected to soft errors with various quantization bound while the bit error rate is set to be 1E-6. It reveals that RRMSE that is closely related with the neural network model accuracy as demonstrated in Section \ref{sec:model_RMSE_acc} increases almost linearly with the quantization bound experimented with all the different layers. On the other hand, we also observe that clean neural network classification accuracy remains steady in a wide range of quantization bound setups, which indicates that it is possible to improve the neural network reliability without accuracy penalty by choosing appropriate quantization bound.

\begin{figure}[tbp]
    \centering
    \subfigure[VGGNet-11]{
    \includegraphics[width=0.45\linewidth]{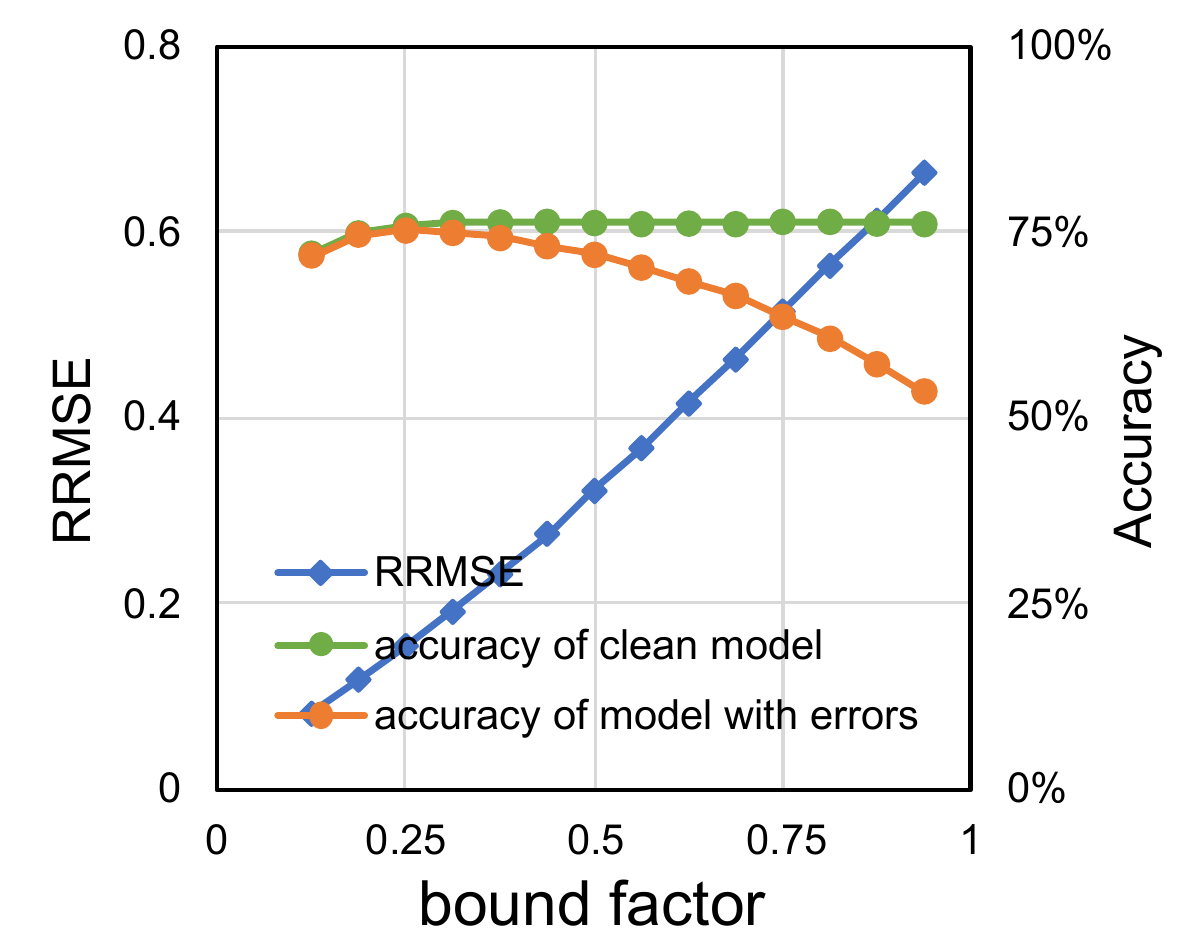}
    }
    \subfigure[ResNet-18]{
    \includegraphics[width=0.45\linewidth]{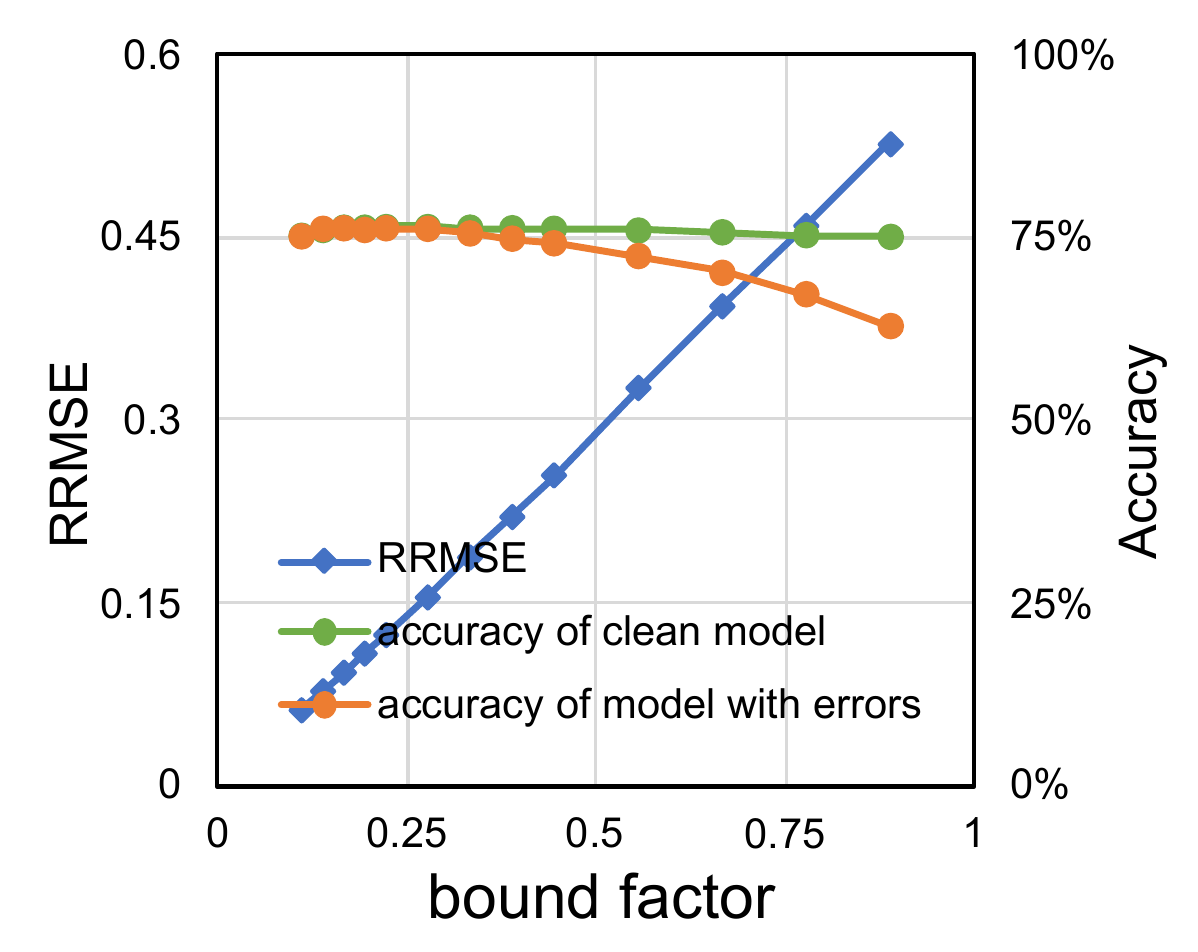}
    }
    \caption{Influence of quantization bound on RRMSE}
    \label{fig:bound_RMSE}
\end{figure}

\begin{figure}[htbp]
    \centering
    \subfigure[VGGNet-11]{
        \includegraphics[width=0.45\linewidth]{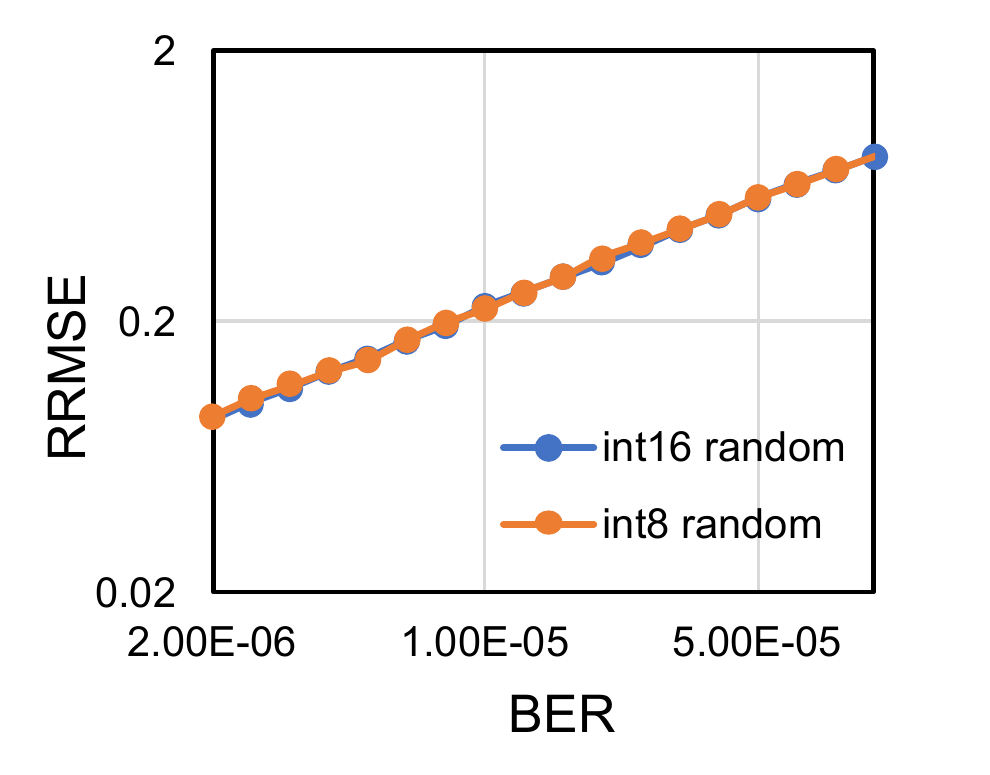}
    }
    \subfigure[ResNet-18]{
        \includegraphics[width=0.45\linewidth]{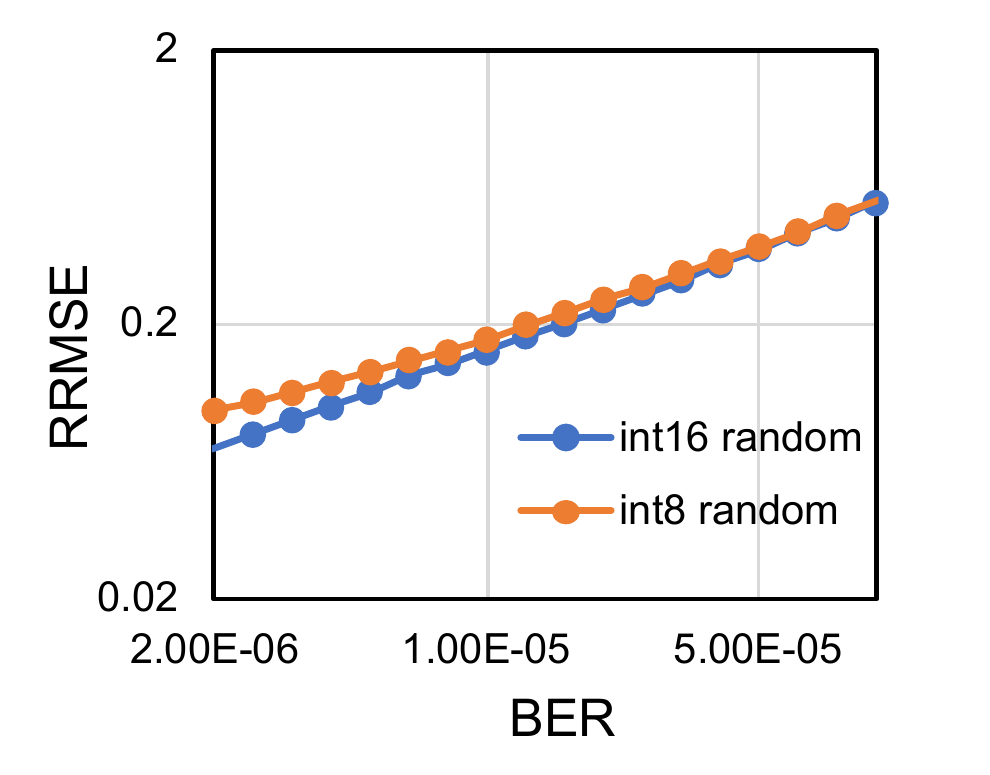}
    }
    \caption{Influence of quantizaton bitwidth on RRMSE}
    \label{fig:bitwidth_RMSE}
\end{figure}

According to the comparison in Equation  \ref{eq:int8-error-expectation} and Equation \ref{eq:int16-error-expectation}, we notice that the average disturbance induced by a single bit error for neural network model quantized with int16 is $\sqrt2\times$ smaller than that quantized with int8. On the other hand, the expected total number of bit errors for model quantized with int16 is twice larger than that quantized with int8 given the same bit error rate. According to Equation \ref{equ:combine}, $RRMSE$ of the same neural network model with more bit errors will be $\sqrt{2}\times$ larger. Hence, $RRMSE_{int8}$ of an neural network is equal to $RRMSE_{int16}$ in theory given the same bit error rate. Similar to prior analysis, we take VGGNet-11 and ResNet-18 quantized with int8 and int16 as examples and conduct fault simulation  to verify the model based analysis. The bit error rate is set to be 1E-6 and the quantization bound is set to be the same for both quantization data width. The experiment results shown in Figure \ref{fig:bitwidth_RMSE} demonstrate that RRMSE of neural network models are generally steady despite the quantization data width, which is consistent with the model based analysis.

\subsection{Influence of Classification Complexity on NN Resilience}\label{sec:analyze_classification}

Intuitively, we notice that easier deep learning tasks are generally more resilient to errors, but it is usually difficult to define the complexity of a neural network processing task. Equation \ref{eq:accuracy-RMSE-multi} provides a model to characterize the relation between the number of classification types and classification accuracy. When we take the number of classification types as a metric of neural network complexity, it provides a simple yet efficient angle to characterize the relation between neural network complexity and neural network resilience. The model in Equation \ref{eq:accuracy-RMSE-multi}  proves that neural networks with more classification types are more vulnerable subjected to the same number of errors. To verify this, we take VGGNet-11 and ResNet-18 on ImageNet as examples and then configure them for a set of classification tasks with different number of classification types i.e. $nc$ ranging from 2 to 1000. Then, we explore the resulting accuracy of these classification tasks subjected to the same bit error setups. The experiment result is shown in Figure \ref{fig:model_RMSE_acc}. It reveals that neural network models with less classification types generally have much higher accuracy and the accuracy drops slower with increasing bit error rate compared to that with more classification types.

\begin{figure}[htbp]
    \centering
    \subfigure[VGGNet-11]{
    \includegraphics[width=0.46\linewidth]{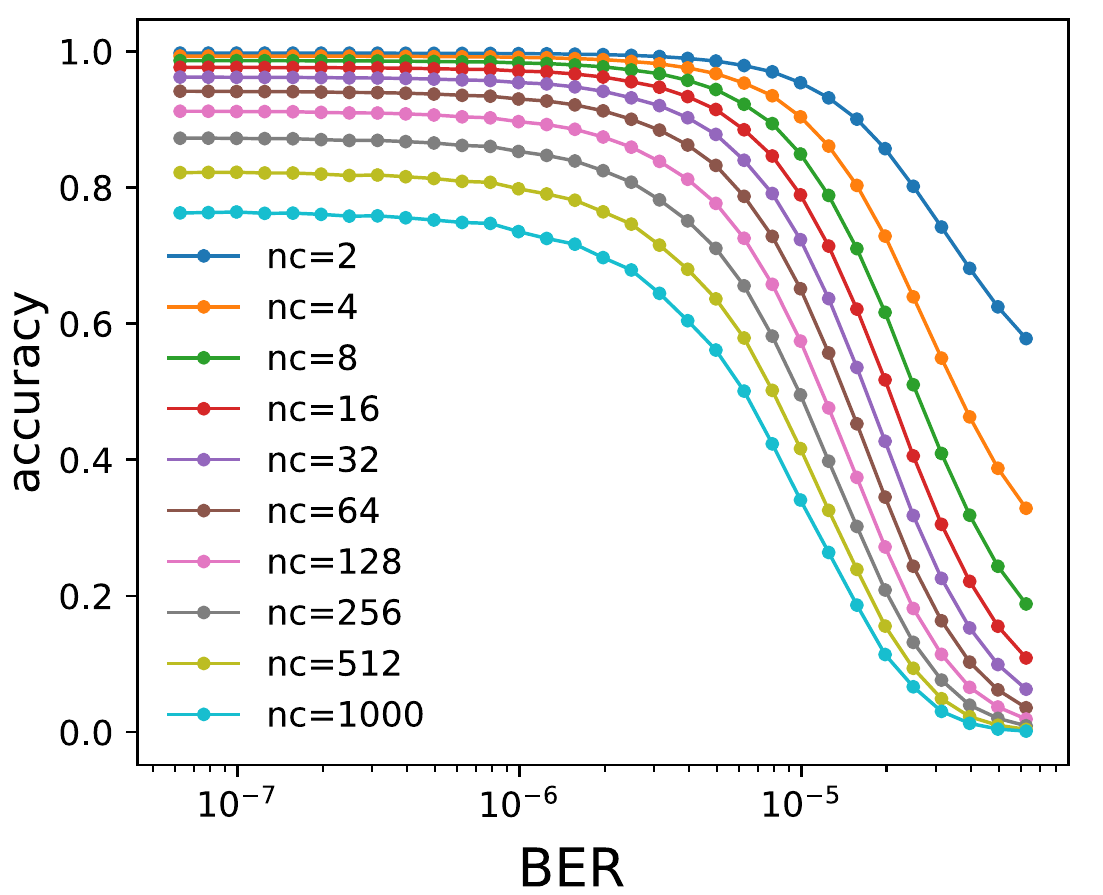}
    }
    \subfigure[ResNet-18]{
    \includegraphics[width=0.46\linewidth]{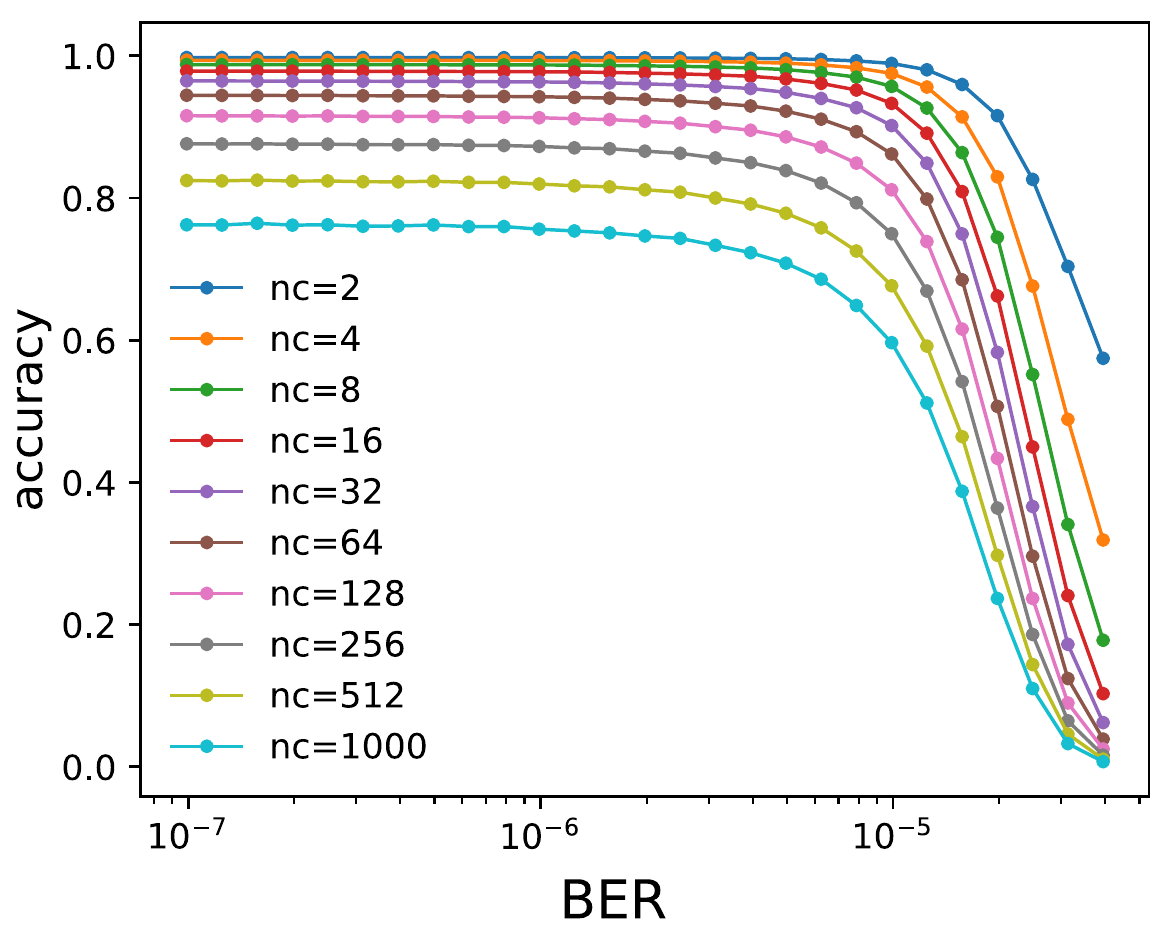}
    }
    \caption{VGG-11 classification accuracy under different number of classification types}
    \label{fig:model_RMSE_acc}
\end{figure}

\section{Modeling for Fault Simulation Acceleration}
Fault simulation is a typical practice for neural network resilience analysis under hardware errors and several fault simulation tools \cite{PytorchFI} \cite{TensorFI} \cite{FIdelity} targeting at neural networks have been developed with different trade-offs between simulation accuracy and speed. Usually, a large number of errors need to be injected and a variety of fault configurations need to be explored to ensure steady simulation results, which can be rather time-consuming and expensive. Orthogonal to prior fault simulation approaches, we mainly investigate how the modeling proposed in this work can be utilized to accelerate the fault simulation with negligible simulation accuracy loss.

\subsection{Fault Simulation Acceleration Approaches}
To begin, we will introduce three statistical model based approaches that can be utilized to accelerate general fault simulation. The basic idea is to leverage the proposed statistical models to predict the fault simulation results with only a fraction of simulation setups and reduce the number of fault simulation of all the possible fault configurations. The three acceleration approaches are listed as follows.

First, according to Equation \ref{eq:empirical-accuracy-RMSE-multi}, we can characterize the relation between RRMSE and model accuracy under various bit error rate setups with only five data points. Moreover, RRMSE of a model can be obtained with less input images and converges much faster than that of model accuracy. Particularly, we take VGGNet-11 on ImageNet as an example and investigate how the RRMSE and model accuracy of VGGNet-11 changes with different number of input image samples given the same bit error rate. The bit error rate is set to be 1E-6. The experiment result in Figure \ref{fig:coverge_speed} confirms the advantage of using RRMSE in terms of convergence. With both the RRMSE and the correlation curve of RRMSE and model accuracy, we can obtain the correlation curve of bit error rate and model accuracy much faster compared to conventional fault simulation experiments.

\begin{figure}[htbp]
    \centering
    \includegraphics[width=0.9\linewidth]{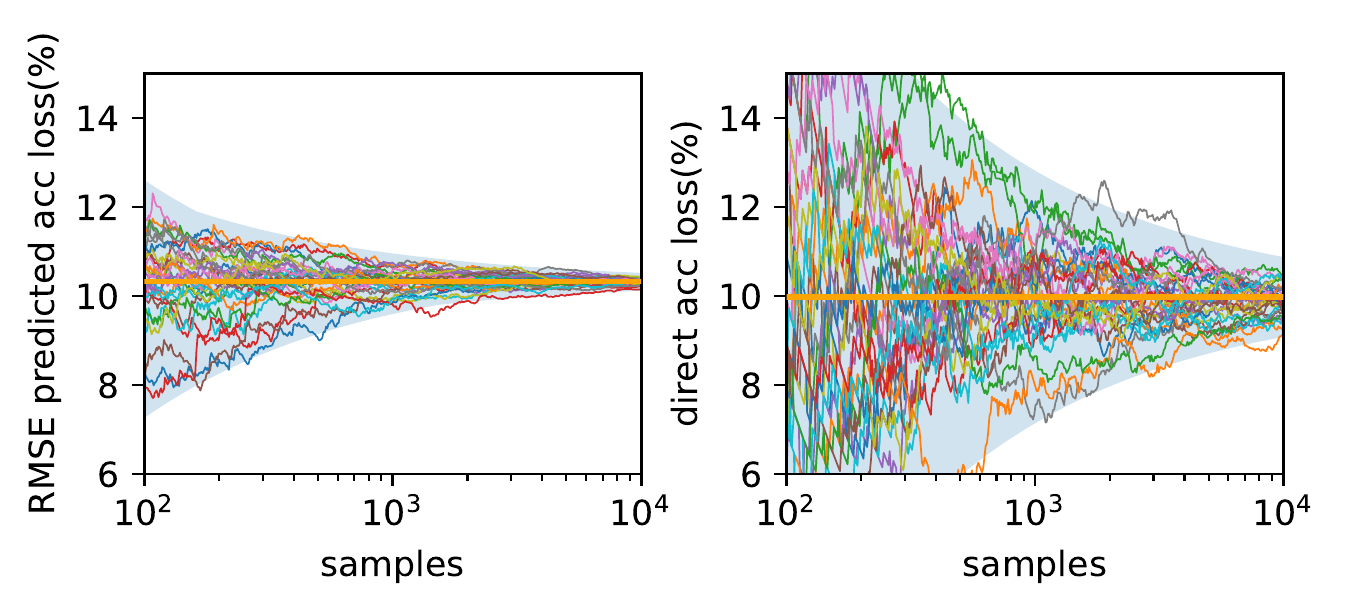}
    \caption{RRMSE(a) and model accuracy (b) obtained under different number of input images. Different lines refer to fault injection with the same bit error rate but on different random seed. It demonstrates that RRMSE converges much faster than model accuracy given more input images.
}
    \label{fig:coverge_speed}
\end{figure}

Second, with the analysis in Section \ref{sec:value_flip_model}, we notice that the error injection on different bits contributes differently to the output RRMSE eventually but the contribution proportion can be calculated based on Equation \ref{eq:int8-error-expectation} and Equation \ref{eq:int16-error-expectation}. Similarly, we can also obtain the expected data disturbance of bit error on MSB. Then, we can calculate $RRMSE_{int8}$ based on $RRMSE_{MSB}$ with Equation \ref{eq:rrmse-scale-int8} and \ref{eq:rrmse-scale-int16}, and replace the standard random bit error injection with most significant bit (MSB) based error injection without compromising the analysis accuracy. Take VGGNet-11 and ResNet-18 quantized with int16 as examples. Figure \ref{fig:exp_rate_rmse} shows the RRMSE obtained with both standard random bit error injection and MSB based error injection. It confirms that the resulting RRMSE obtained with standard error injection and MSB based error injection are linearly correlated as analyzed with the proposed statistical models. This approach can also be applied for straightforward model accuracy simulation. MSB based fault simulation and standard fault simulation for both VGGNet-11 and ResNet-18 is presented in Figure \ref{fig:modelwise_rate_acc}. It can be observed that the curves are quite similar and the difference is mainly induced by the scaled bit error rate. In summary, given the same bit error rate, MSB based error injection can be scaled for standard bit error injection with negligible accuracy penalty while it reduces the total number of injected bit errors substantially and enhances the fault simulation speed accordingly.
\begin{equation} \label{eq:rrmse-scale-int8}
    RRMSE_{MSB}=\sqrt{6}RRMSE_{int8}
\end{equation}

\begin{equation}
\label{eq:rrmse-scale-int16}
    RRMSE_{MSB}=\sqrt{12}RRMSE_{int16}
\end{equation}

\begin{figure}[htbp]
    \centering
    \subfigure[VGGNet-11]{
        \includegraphics[width=0.45\linewidth]{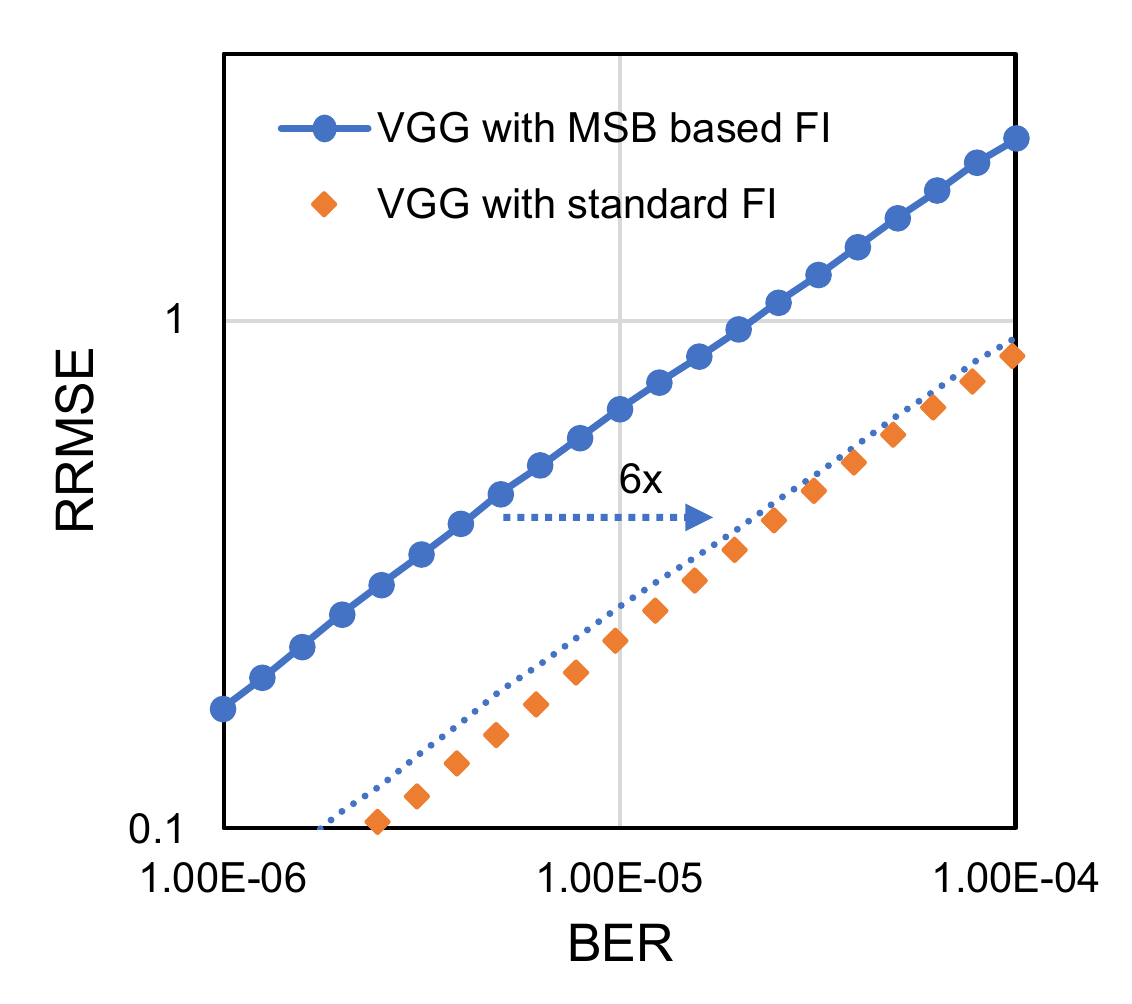}
    }
    \subfigure[ResNet-18]{
        \includegraphics[width=0.45\linewidth]{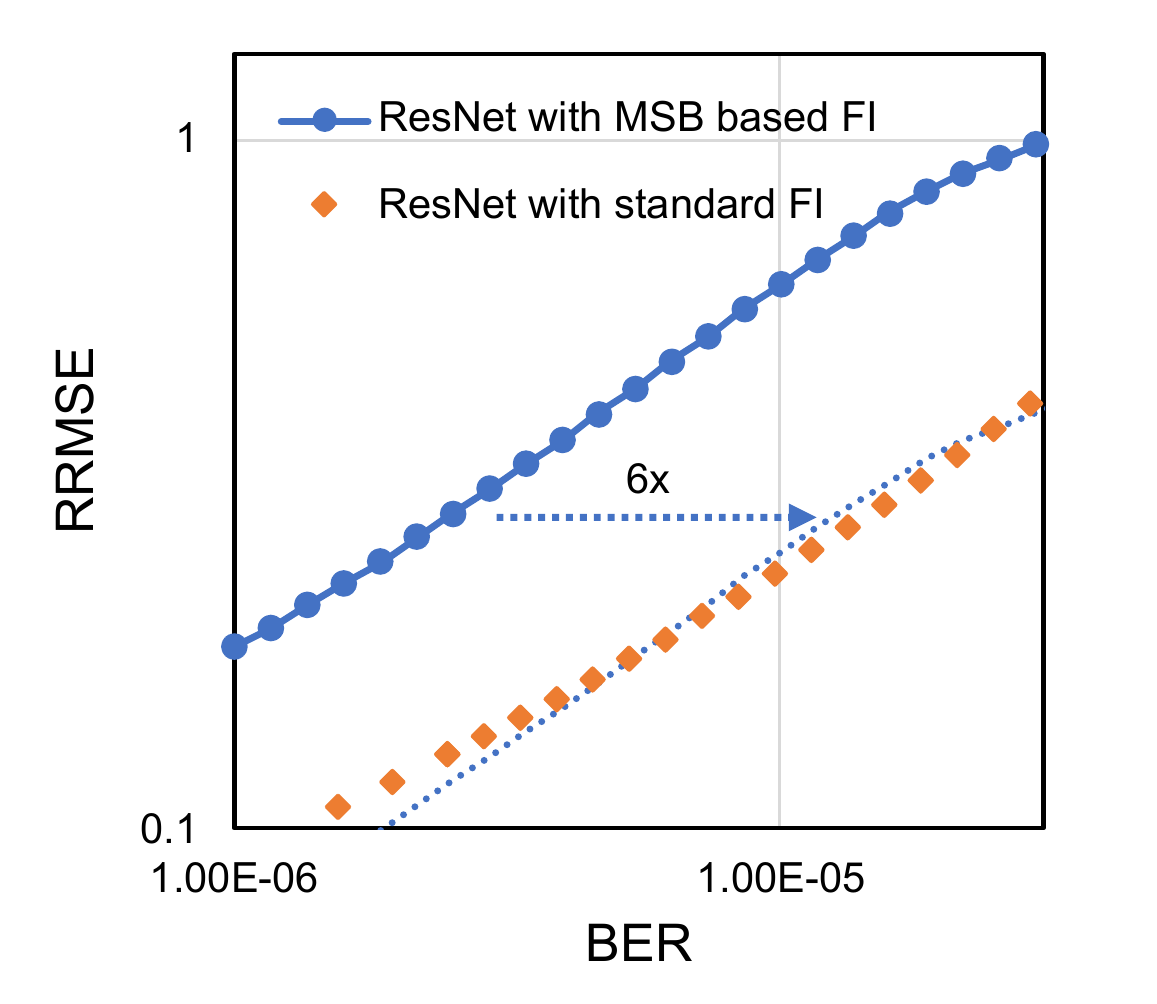}
    }
    \caption{RRMSE of VGGNet-11 and ResNet-18 obtained with both standard fault simulation and MSB based fault simulation.}
    \label{fig:exp_rate_rmse}
\end{figure}

\begin{figure}[htbp]
    \centering
    \subfigure[VGGNet-11]{
        \includegraphics[width=0.45\linewidth]{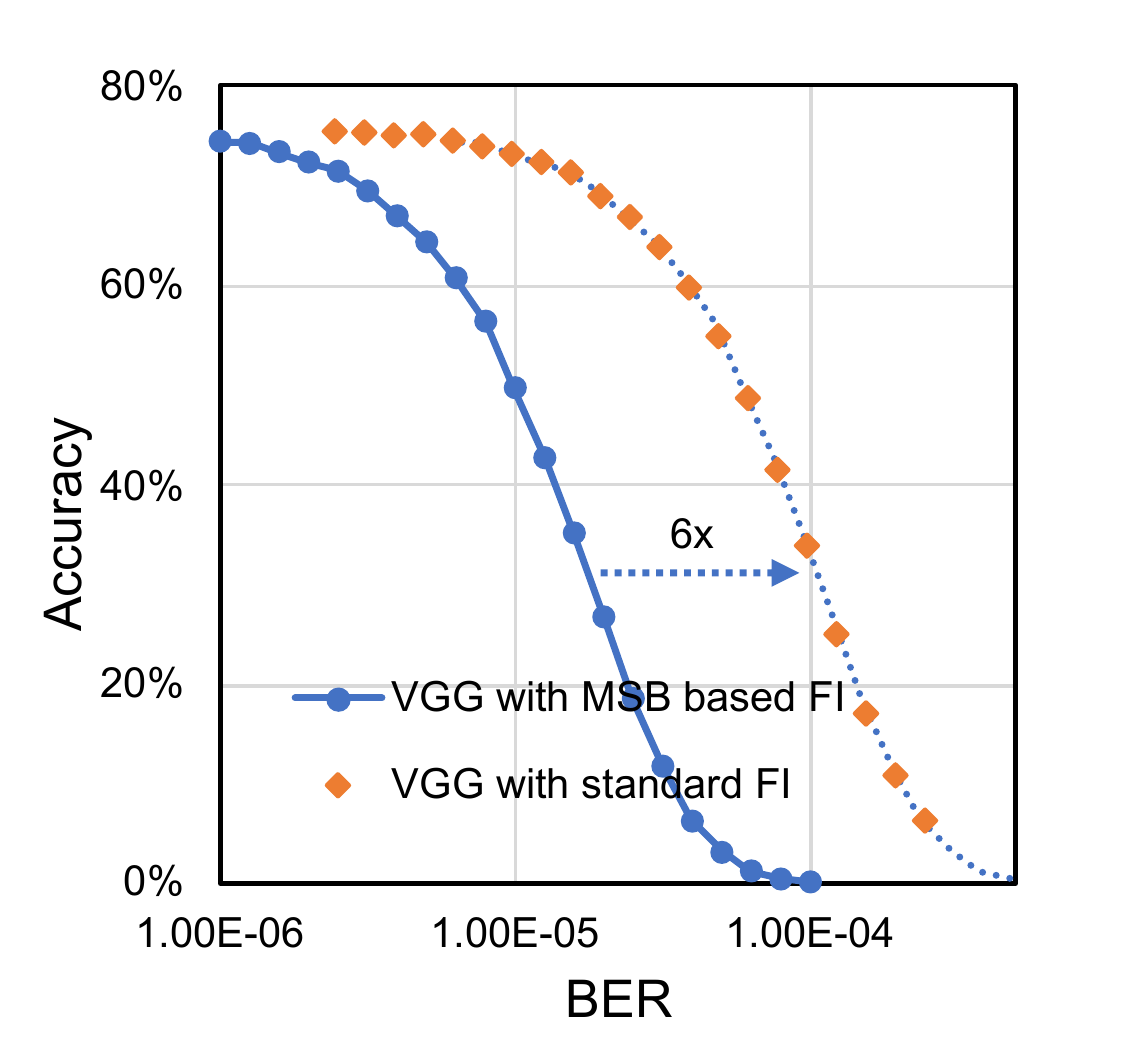}
    }
    \subfigure[ResNet-18]{
        \includegraphics[width=0.45\linewidth]{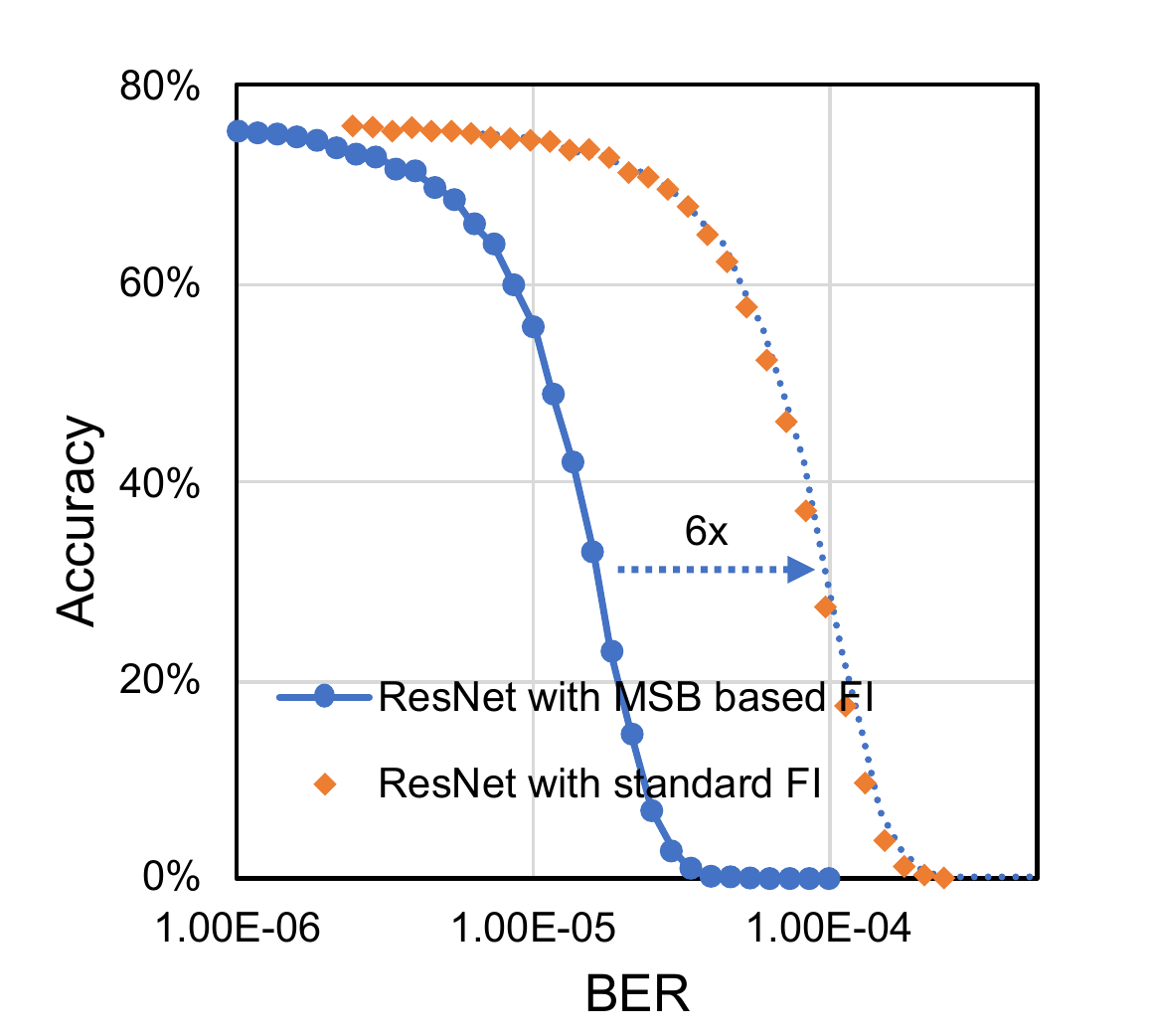}
    }
    \caption{Model accuracy of VGGNet-11 and ResNet-18 obtained with standard bit error injection and MSB based bit error injection.}
    \label{fig:modelwise_rate_acc}
\end{figure}

Third, according to the analysis in Section \ref{sec:estimate_directly}, we notice that the influence of bit errors in different neurons and layers can be aggregated with Equation \ref{equ:combine} when we utilize RRMSE as the accuracy metric. Based on this feature, we can analyze the influence of complex error configurations with a disaggregated approach. For instance, we can analyze the influence of random bit errors in each layer independently and then construct the influence of random bit errors on multiple neural network layers. We can also leverage the aggregation feature to scale RRMSE at lower bit error rate to RRMSE at higher bit error rate. In general, we can take advantage of this feature to obtain fault simulation of complex and large fault configurations with only a fraction of the fault configurations, which can greatly reduce the amount of fault simulation.

\subsection{Fault Simulation Acceleration Examples}
To quantize the fault simulation speedup, we take two typical fault simulation tasks as examples. 

In the first task, we investigate how the model accuracy changes with the increase of bit error rate. We still utilize VGGNet-11 quantized with int16 on ImageNet as the benchmark model. Suppose we want to explore the model accuracy when the bit error rate changes from 1E-7 to 1E-4 and we take 32 evenly distributed bit error rate setups for the experiments. For each bit error rate setup, we take 10000 images from ImageNet to measure the model accuracy. With TensorFI fault injection framework \cite{TensorFI}, we need to conduct $3.2\times10^6$ inference with bit error injection. The fault simulation task can be accelerated with the first and the second approaches. It needs only 4 fault simulation points of RRMSE and model accuracy with $16\times$ less bit error injection. The fault simulation time can be roughly reduced by $128 \times$. The results obtained with both a standard fault simulation and the accelerated fault simulation are shown in \autoref{fig:acclerate_simulation}. They are quite close to each other, which confirms the quality of the accelerated fault simulation.  

\begin{figure}[htbp]
    \centering
    \includegraphics[width=0.6\linewidth]{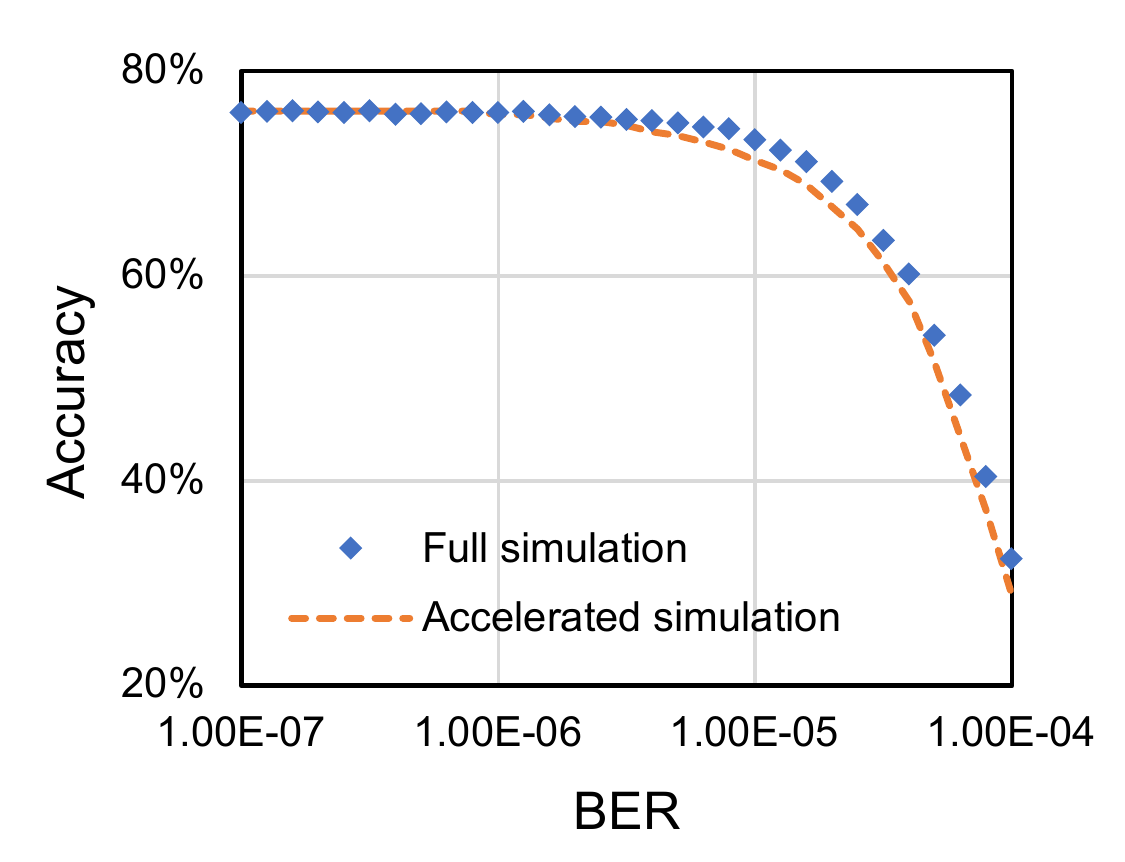}
    \caption{Model accuracy obtained by full simulation and accelerated simulation.}
    \label{fig:acclerate_simulation}
\end{figure}




In the second task, we seek to find out the top-3 most fragile layers of a neural network model such that an selectively hardening approach can be applied to protect the neural network model with less protection overhead. We take VGGNet-11 quantized with int16 as a benchmark example and conduct fault injection at $BER=5\times 10^{-5}$. There are 8 convolution layers in VGGNet-11. A standard fault simulation based method needs to evaluate all the $C^3_8=56$ different combinations. For each configuration, we need to perform inference on around 10000 images. Theoretically, we need to conduct $5.6 \times 10^5$ inference with random bit error injection. In contrast, with the second and the third acceleration approaches, we only need to conduct 8 layer-wise MSB based error injection and perform inference on 1000 images for each error injection setup to obtain the neural network RRMSE. Then, we can obtain RRMSE of all the different layer combinations immediately according to Equation \ref{equ:combine} in Section \ref{sec:modeling}. In this case, the fault simulation time can be reduced by $560\times$. For neural networks with more layers, this method can achieve exponential acceleration. In addition, we also evaluate the selected top-3 fragile layers based on the accelerated fault simulation approach. Suppose the top-3 fragile layers are fault-free, we can obtain the model accuracy with standard fault simulation and compare with that of all the 56 different combinations. The experiment result is shown in \autoref{fig:vgg11_comb_accel}. It demonstrates that the selected top-3 layer are the most fragile layers of VGGNet-11, which is consistent with the results of standard fault simulation. In summary, the accelerated fault simulation not only reduces the execution time but also achieves high-quality results on this vulnerability analysis task.


\begin{figure}
    \centering
    \includegraphics[width=0.6\linewidth]{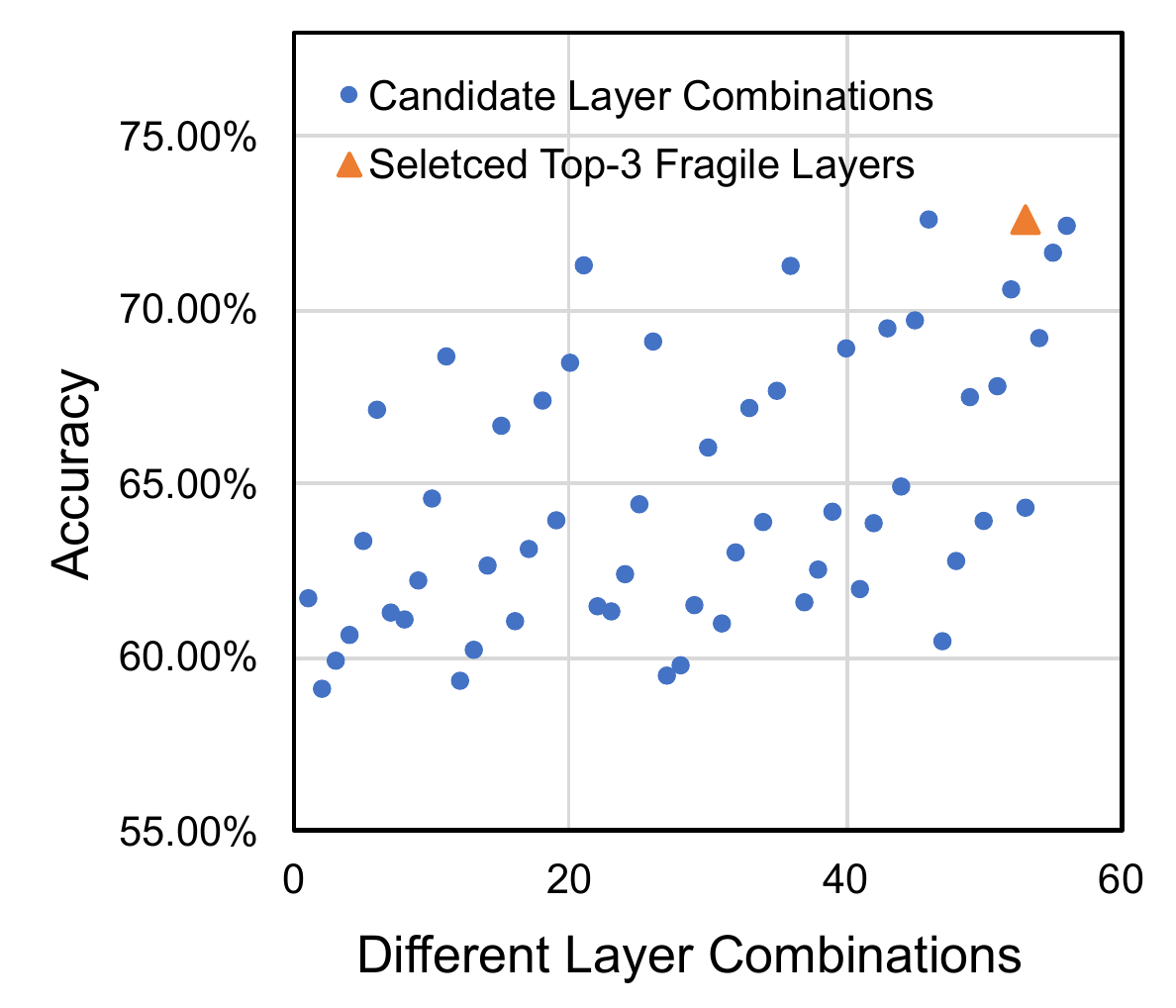}
    \caption{We have the top-3 fragile layers selected with the proposed fault simulation approach evaluated and compare with all the 56 different candidate combinations. We have the accuracy of VGGNet-11 under error injection used as the metric and assume the evaluated 3 convolution layers of VGGNet-11 are set to be fault-free.}
    \label{fig:vgg11_comb_accel}
\end{figure}

\section{Conclusion}
In this work, we observe that soft errors propagate across a large number of neurons and accumulate. With the observation, we propose to characterize the disturbance induced by the soft errors with a normal distribution model according to central limit theorem and analyze the influence of soft errors on neural networks with a series of statistical models. The models are further applied to  analyze the influence of convolutional neural network parameters on its resilience and verified with experiments comprehensively, which can guide the fault-tolerant neural network design. In addition, we find that the models can also be utilized to predict many fault simulation results precisely and avoid lengthy fault simulation in practice. Given two typical fault simulation tasks, the model-accelerated fault simulation can be more than two orders of magnitude faster on average with negligible accuracy loss compared to standard fault simulation.

\bibliographystyle{plain}
\bibliography{cite}

\end{document}